\newcommand{\inner}[2]{\left\langle #1, #2 \right\rangle}
\newcommand{\innerh}[2]{\inner{#1}{#2}_\mathcal{H}}

\documentclass[twoside, 11pt]{article}
\usepackage[hmargin=2cm,vmargin=3.5cm]{geometry}
\usepackage{algorithm2e,amsmath,amsthm,amssymb,graphicx}

\newtheorem{thm}{Theorem}[section]
\newtheorem{cor}[thm]{Corollary}
\newtheorem{lem}[thm]{Lemma}
\newtheorem{prop}[thm]{Proposition}

\theoremstyle{remark}
\newtheorem{remark}[thm]{Remark}
\theoremstyle{definition}
\newtheorem{defn}[thm]{Definition}

\let\hat\widehat
\let\tilde\widetilde

\begin{document}

\title{Differential Privacy for Functions and Functional Data}

\author{Rob Hall \\
        \texttt{rjhall@cs.cmu.edu} \\
        Machine Learning Department,\\
        Carnegie Mellon University,\\
        Pittsburgh, PA 15289, USA
        \and
        Alessandro Rinaldo \\
        \texttt{arinaldo@stat.cmu.edu} \\
        Department of Statistics,\\
        Carnegie Mellon University,\\
        Pittsburgh, PA 15289, USA
        \and
        Larry Wasserman \\
        \texttt{larry@stat.cmu.edu} \\
        Department of Statistics,\\
        Carnegie Mellon University,\\
        Pittsburgh, PA 15289, USA}

\maketitle

\begin{abstract}
Differential privacy is a framework
for privately releasing summaries of a database.
Previous work has focused mainly on methods for which
the output is a finite dimensional vector, or an element of some discrete set.
We develop methods for releasing functions
while preserving differential privacy.
Specifically, we show that adding an appropriate Gaussian process
to the function of interest yields
differential privacy.  When the functions lie in the same RKHS as the
Gaussian process, then the correct noise level is established by
measuring the ``sensitivity'' of the function in the RKHS norm.
As examples we consider kernel density estimation,
kernel support vector machines,
and functions in reproducing kernel Hilbert spaces.
\end{abstract}

\section{Introduction}

Suppose we have database $D$ which consists of measurements of a set of individuals.  We want to release a summary of $D$
without compromising the privacy of those individuals
in the database.
One framework for defining privacy rigorously in such problems
is {\em differential privacy}
\cite{DMNS06,Dwork:06}.
The basic idea is to produce an output via random noise addition.
An algorithm which does this may be thought of as inducing a distribution $P_D$
on the output space (where the randomness is due to
internal ``coin flips'' of the algorithm),
for every input data set $D$.
Differential privacy,
defined in Section \ref{section::ADE},
requires that $P_D$ not depend too strongly on any single
element of the database $D$.

The literature on differential privacy is vast.
Algorithms that preserve differential privacy
have been developed
for boosting, parameter estimation, clustering, logistic regression, SVM learning
and many other learning tasks.
See, for example,
\cite{boosting},
\cite{pplr},
\cite{smith:2008}, \cite{km.jmlr},
\cite{NRS07},
\cite{KLN+08},
\cite{barak2007paa},
and references therein.
In all these cases, the data (both the input and output) are assumed to be
real numbers or vectors.
In this paper we are concerned with a setting in which the output,
and possibly the input data set, consist of functions.

A concept that has been important in differential privacy is
the ``sensitivity'' of the output \cite{DMNS06}.  In the case
of vector valued output
the sensitivity is typically measured in the Euclidean norm or the
$\ell_1$-norm.  We find that when the output is a function the
sensitivity may be measured in terms of an RKHS norm.  To establish
privacy a Gaussian process may be added to the function with noise level
calibrated to the ``RKHS sensitivity'' of the output.

The motivation for considering function valued data is two-fold.
First, in some problems the data are naturally function valued,
that is, each data point is a function.
For example, growth curves, temperature profiles,
and economic indicators are often of this form.
This has given rise to a subfield of statistics
known as functional data analysis
(see, for instance \cite{ramsay}).
Second, even if the data are not functions,
we may want to release a data summary that is a function.
For example,
if the data $d_1,\ldots, d_n\in\mathbb{R}^d$ are a sample
from a distribution with density $f$
then we can estimate the density with
the kernel density estimator
$$
\hat f(x) = \frac{1}{n}\sum_{i=1}^n W\left(\frac{||x-d_i||}{h}\right), \quad x \in \mathbb{R}^d,
$$
where $W$ is a kernel  (see, for instance \cite{olive-non}) and $h > 0 $ is the bandwidth parameter.
The density estimator is useful for may tasks such as
clustering and classification.
We may then want to release a ``version''
of the density estimator $\hat f$ in a way which fulfills the criteria of differential privacy.
The utility of such a procedure goes beyond merely estimating the underlying density.
In fact, suppose the goal is to release a privatized database.
With a differentially private density estimator in hand, a large sample of data may be drawn from
that density.  The release of such a sample would inherit the differential privacy properties of the
density estimator: see, in particular,  \cite{WZ} and \cite{MKAGV08}.  This is a very attractive proposition,
since a differentially private sample of data could be used as the basis for
any number of statistical analyses which may have been brought to bear
against the original data (for instance, exploratory data analysis, model fitting, etc).

Histograms are an example of a density estimator that
has been ``privatized'' in previous literature
\cite{WZ,CDMT05}.
However, as density estimators, histograms are suboptimal
because they are not smooth.
Specifically, they do converge at the minimax rate
under the assumption that the true density is smooth.
The preferred method for density estimation in statistics
is kernel density estimation.
The methods developed in this paper lead to a private kernel density estimator.

In addition to kernel density estimation, there are a myriad of other scenarios in
which the result of a statistical analysis is a function.  For
example, the regression function or classification function from a supervised
learning task.  We demonstrate how the theory we develop may be applied in these
contexts as well.

{\em Outline.}
We introduce some notation and review the definition
of differential privacy
in Section \ref{section::ADE}.  We also give a demonstration
of a technique to achieve the differential privacy for a vector
valued output.
The theory for demonstrating differentially privacy of
functions is established in Section
\ref{section::functions}.
In Section
\ref{section::examples}
we apply the theory to the problems of kernel density estimation and
kernel SVM learning.  We also demonstrate how the theory may apply to a
broad class of functions (a Sobolev space).
Section
\ref{section::algorithms}
discusses possible algorithms for outputting  functions.

\section{Differential Privacy}
\label{section::ADE}

Here we recall the definition of differential privacy and introduce some notation.
Let $D = (d_1,\ldots,d_n) \in \mathcal{D}$ be an input database in which
$d_i$ represents a row or an individual, and where $\mathcal{D}$ is the
space of all such databases of $n$ elements.  For two databases $D,D^\prime$,
we say they are ``adjacent'' or ``neighboring'' and write $D\sim D^\prime$
whenever both have the same number of elements, but differ in one element.
In other words, there exists a permutation of $D$
having Hamming distance of 2 to $D^\prime$.  In some other works databases
are called ``adjacent'' whenever one database contains the other together with
exactly one additional element.


We may characterize a non-private algorithm in terms of the function it outputs,
e.g., $\theta:\mathcal{D}\to\mathbb{R}^d$.  Thus we write $\theta_D = \theta(D)$
to mean the output when the input database is $D$.  Thus, a computer program which
outputs a vector may be characterized as a family of vectors $\{\theta_D:D\in\mathcal{D}\}$, one for every possible input database.
 Likewise a computer program which is randomized may be characterized by the
distributions $\{P_D:D\in\mathcal{D}\}$ it induces on the output space (e.g.,
$\mathbb{R}^d$) when the input is $D$.
We consider randomized algorithms where the input is a database in
$\mathcal{D}$ and the output takes values in
a measurable space $\Omega$ endowed with the $\sigma$-field ${\cal A}$.
Thus, to each such algorithm there correspond the set of distributions
$\{P_D:\ D\in {\cal D}\}$ on $(\Omega,\mathcal{A})$ indexed by databases.
We phrase the definition of differential privacy using this characterization
of randomized algorithms.

\begin{defn}[Differential Privacy]
A set of distributions $\{P_D:D\in\mathcal{D}\}$ is called $(\alpha,\beta)$-differentially private, or said to ``achieve $(\alpha,\beta)$-DP'' whenever for all $D \sim D^\prime\in {\cal D}$ we have:

\begin{equation}
\label{eq_dp}
P_D(A) \leq e^\alpha P_{D^\prime}(A) + \beta,\ \
\forall A \in \mathcal{A},
\end{equation}

\noindent where $\alpha,\beta \geq 0$ are parameters, and $\mathcal{A}$ is the finest $\sigma$-field on which all $P_D$ are defined.
\end{defn}

Typically the above definition is called ``approximate differential privacy'' whenever $\beta > 0$, and ``$(\alpha,0)$-differential privacy'' is shortened to ``$\alpha$-differential privacy.''  It is important to note that the relation $D \sim D^\prime$ is symmetric, and so the inequality (\ref{eq_dp}) is required to hold when $D$ and $D^\prime$ are swapped.  Throughout this paper we take $\alpha \leq 1$, since this simplifies some proofs.

The $\sigma$-field $\mathcal{A}$
is rarely mentioned in the literature on differential privacy
but is actually quite important.
For example if we were to take
$\mathcal{A} = \{\Omega,\emptyset\}$ then the condition
(\ref{eq_dp}) is trivially satisfied by any randomized algorithm.
To make the definition as strong as possible
we insist that $\mathcal{A}$ be the finest
available $\sigma$-field on which the $P_D$ are defined.  Therefore when $\Omega$ is discrete the typical $\sigma$-field is $\mathcal{A} = 2^\Omega$
(the class of all subsets of $\Omega$),
and when $\Omega$ is a
space with a topology it is typical to use the completion of the
Borel $\sigma$-field
(the smallest $\sigma$-field containing all open sets).
We raise this point since when $\Omega$ is a space of functions, the choice
of $\sigma$-field is more delicate.

\subsection{Differential Privacy of Finite Dimensional Vectors}

\cite{Dwork:06b} give a technique to achieve approximate differential privacy
for general vector valued outputs in which the ``sensitivity'' may be bounded.
We review this below, since the result is important in the demonstration
of the privacy of our methods which output functions.
What follows in this section is a mild alteration to the technique
developed by \cite{Dwork:06b} and \cite{dp_recom}, in that the ``sensitivity''
of the class of vectors is measured in the Mahalanobis distance rather
than the usual Euclidean distance.

In demonstrating the differential privacy, we make use of the following lemma which is simply an explicit statement of an argument used in a proof by \cite{Dwork:06b}.

\begin{lem}
\label{lem_achieve_dp}
Suppose that, for all $D\sim D^\prime$, there exists a set $A^\star_{D,D^\prime} \in \mathcal{A}$ such that, for all $S \in \mathcal{A}$,
\begin{equation}
\label{eq_classic_dp}
S \subseteq A^\star_{D,D^\prime} \Rightarrow P_D(S) \leq e^\alpha P_{D^\prime}(S)
\end{equation}
and
\begin{equation}
\label{eq_astar_measure}
P_D(A^\star_{D,D^\prime}) \geq 1-\beta.
\end{equation}
Then the family $\{P_D\}$ achieves the $(\alpha,\beta)$-DP.
\end{lem}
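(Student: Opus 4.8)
The plan is to check the defining inequality (\ref{eq_dp}) directly for an arbitrary fixed neighboring pair $D \sim D^\prime$ and an arbitrary $A \in \mathcal{A}$, by splitting $A$ along the distinguished set $A^\star_{D,D^\prime}$. Write $A$ as the disjoint union $A = (A \cap A^\star_{D,D^\prime}) \cup (A \setminus A^\star_{D,D^\prime})$; both pieces lie in $\mathcal{A}$ since $\mathcal{A}$ is a $\sigma$-field and $A, A^\star_{D,D^\prime} \in \mathcal{A}$, so finite additivity of $P_D$ gives $P_D(A) = P_D(A \cap A^\star_{D,D^\prime}) + P_D(A \setminus A^\star_{D,D^\prime})$.

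For the first term, apply hypothesis (\ref{eq_classic_dp}) with $S = A \cap A^\star_{D,D^\prime}$, which is legitimate because $S \subseteq A^\star_{D,D^\prime}$; this yields $P_D(A \cap A^\star_{D,D^\prime}) \leq e^\alpha P_{D^\prime}(A \cap A^\star_{D,D^\prime}) \leq e^\alpha P_{D^\prime}(A)$, the last inequality by monotonicity of $P_{D^\prime}$. For the second term, $A \setminus A^\star_{D,D^\prime} \subseteq (A^\star_{D,D^\prime})^c$, so $P_D(A \setminus A^\star_{D,D^\prime}) \leq P_D\big((A^\star_{D,D^\prime})^c\big) = 1 - P_D(A^\star_{D,D^\prime}) \leq \beta$ by (\ref{eq_astar_measure}). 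Summing the two bounds gives $P_D(A) \leq e^\alpha P_{D^\prime}(A) + \beta$, which is (\ref{eq_dp}) for this pair. Since the argument used an arbitrary ordered pair with $D \sim D^\prime$, and $\sim$ is symmetric, the same conclusion holds with $D$ and $D^\prime$ interchanged (applying the hypotheses to the pair $D^\prime \sim D$ with its own witness $A^\star_{D^\prime,D}$), so $\{P_D\}$ achieves $(\alpha,\beta)$-DP.

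There is essentially no obstacle here: the lemma is a bookkeeping restatement of the standard decomposition argument. The only points requiring a little care are (i) keeping every set that appears inside $\mathcal{A}$ so that all the probabilities are defined, and (ii) not inadvertently assuming $A^\star_{D,D^\prime} = A^\star_{D^\prime,D}$ — this is neither claimed nor needed, since the symmetry of the DP condition is handled by re-running the same one-sided estimate on the swapped pair.
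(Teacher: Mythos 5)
Your argument is correct and is essentially identical to the paper's proof: the same decomposition of the event along $A^\star_{D,D^\prime}$, with (\ref{eq_classic_dp}) applied to the piece inside $A^\star_{D,D^\prime}$, (\ref{eq_astar_measure}) bounding the piece outside by $\beta$, and monotonicity finishing the estimate. Your extra remark on handling the swapped pair $D^\prime\sim D$ via its own witness set is a fine clarification but not a departure from the paper's reasoning.
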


\begin{proof}
\noindent Let $S \in \mathcal{A}$. Then,
\begin{eqnarray*}
P_D(S) & = & P_D(S\cap A^\star) + P_D(S\cap A^{\star C}) \leq  P_D(S\cap A^\star) + \beta \\
       & \leq & e^\alpha P_{D^\prime}(S\cap A^\star) + \beta  \leq e^\alpha P_{D^\prime}(S) + \beta.
\end{eqnarray*}
The first inequality is due to (\ref{eq_astar_measure}), the second is
due to (\ref{eq_classic_dp}) and the third is due to the subadditivity
of measures.
\end{proof}

\noindent The above result shows that, so long as there is a large enough (in terms of the measure $P_D$) set on which the $(\alpha,0)$-DP condition holds, then the approximate $(\alpha,\beta)$-DP is achieved.

\begin{remark}
If $(\Omega,\mathcal{A})$ has a $\sigma$-finite dominating measure $\lambda$, then for (\ref{eq_classic_dp}) to hold a sufficient condition is that the ratio of the densities be bounded on some set $A^\star_{D,D^\prime}$:

\begin{equation}
\label{eq_bounded_densities}
\forall a \in A^\star_{D,D^\prime}: \frac{dP_D}{d\lambda}(a) \leq e^\alpha \frac{dP_{D^\prime}}{d\lambda}(a).
\end{equation}
This follows from the inequality
$$
P_D(S) = \int_S \frac{dP_D}{d\lambda}(a)\ d\lambda(a)
\leq \int_S e^\alpha\frac{dP_{D^\prime}}{d\lambda}(a)\ d\lambda(a)
= e^\alpha P_{D^\prime}(S).
$$
\end{remark}

In our next result we show that approximate differential privacy is achieved via
(\ref{eq_bounded_densities}) when the output is a real vector, say $v_D = v(D)
\in \mathbb{R}^d$, whose dimension does not depend on the database
$D$.  An example is when the database elements $d_i \in \mathbb{R}^d$ and the output is
the mean vector $v(D) = n^{-1}\sum_{i=1}^nd_i$.

\begin{prop}
\label{prop_dp_vec}
Suppose that, for a positive definite symmetric matrix $M\in\mathbb{R}^{d\times d}$, the family of vectors $\{v_D: D\in\mathcal{D}\} \subset \mathbb{R}^d$ satisfies

\begin{equation}
\label{bound_cond}\sup_{D\sim D^\prime} \|M^{-1/2}(v_D-v_{D^\prime})\|_2 \leq \Delta.
\end{equation}

Then the randomized algorithm which, for input database $D$ outputs
$$
\tilde{v}_D = v_D + \frac{c(\beta)\Delta}{\alpha}Z,\quad Z \sim \mathcal{N}_d(0,M)
$$
achieves $(\alpha,\beta)$-DP whenever
\begin{equation}
\label{eq_c_def}
c(\beta) \geq  \sqrt{2\log{\frac{2}{\beta}}}.
\end{equation}
\end{prop}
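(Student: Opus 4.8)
The plan is to apply Lemma \ref{lem_achieve_dp} via the density-ratio sufficient condition \eqref{eq_bounded_densities}, so the first task is to write down the relevant densities. For a fixed database $D$, the output $\tilde v_D$ has a $\mathcal{N}_d(v_D, \sigma^2 M)$ distribution with $\sigma = c(\beta)\Delta/\alpha$; this is absolutely continuous with respect to Lebesgue measure $\lambda$ on $\mathbb{R}^d$ since $M$ is positive definite. I would compute the log-density ratio between the outputs for neighboring $D \sim D^\prime$: writing $a \in \mathbb{R}^d$ for the output point, a direct Gaussian computation gives
\[
\log \frac{dP_D}{dP_{D^\prime}}(a) = \frac{1}{2\sigma^2}\Big( \|M^{-1/2}(a - v_{D^\prime})\|_2^2 - \|M^{-1/2}(a - v_D)\|_2^2 \Big),
\]
and expanding the squares collapses the quadratic terms, leaving an affine function of $a$.

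Next I would bound this affine expression. Setting $u = M^{-1/2}(v_D - v_{D^\prime})$, the ratio becomes $\sigma^{-2}\big( \langle M^{-1/2}(a - v_D), u\rangle + \tfrac12 \|u\|_2^2 \big)$, and by \eqref{bound_cond} we have $\|u\|_2 \le \Delta$. The natural candidate set is
\[
A^\star_{D,D^\prime} = \Big\{ a : \langle M^{-1/2}(a - v_D), u\rangle \le \sigma^2 \alpha - \tfrac12\|u\|_2^2 \Big\},
\]
which is a half-space, hence measurable, and on which the log-ratio is at most $\alpha$, giving \eqref{eq_bounded_densities} and therefore \eqref{eq_classic_dp}. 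The remaining task is to verify the measure condition \eqref{eq_astar_measure}: under $P_D$, the quantity $\langle M^{-1/2}(\tilde v_D - v_D), u\rangle$ is a mean-zero Gaussian with variance $\sigma^2 \|u\|_2^2$, so it equals $\sigma \|u\|_2 \cdot g$ for a standard normal $g$. One then needs $P(\sigma\|u\|_2 g \le \sigma^2\alpha - \tfrac12\|u\|_2^2) \ge 1-\beta$, i.e. a standard Gaussian tail bound; plugging in $\sigma = c(\beta)\Delta/\alpha$ and using $\|u\|_2 \le \Delta$, $\alpha \le 1$, this reduces to checking that $c(\beta) \ge \sqrt{2\log(2/\beta)}$ suffices, via the bound $P(g > t) \le \tfrac12 e^{-t^2/2}$.

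The main obstacle is the last step: the inequality \eqref{bound_cond} bounds $\|u\|_2$ but does not pin it down, so the threshold $\sigma^2\alpha - \tfrac12\|u\|_2^2$ and the standard deviation $\sigma\|u\|_2$ both depend on the unknown $\|u\|_2$. I would handle this by showing the tail probability is monotone in $\|u\|_2$ over the relevant range (or simply by substituting the worst case and the crude bound $\tfrac12\|u\|_2^2 \le \tfrac12\Delta^2 \le \tfrac12 \sigma^2\alpha^2 \le \tfrac12\sigma^2\alpha$, which uses $\alpha \le 1$ and $\Delta \le \sigma\alpha$), reducing everything to $P(g > c(\beta)) \le \beta/2 \le \beta$. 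A symmetry remark is also needed: since $D \sim D^\prime$ is symmetric, the same argument with $D$ and $D^\prime$ interchanged yields the reversed inequality, so the pair $(\alpha,\beta)$-DP condition holds in both directions, and Lemma \ref{lem_achieve_dp} delivers the conclusion.
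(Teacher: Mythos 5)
Your proposal follows essentially the same route as the paper's proof: a Gaussian density-ratio computation dominated by Lebesgue measure, a half-space $A^\star_{D,D'}$ on which the log-ratio is at most $\alpha$, and an application of Lemma~\ref{lem_achieve_dp} after a Gaussian tail estimate, with the worst case taken at $\|u\|_2=\Delta$. Your primary route for the last step (monotonicity of the tail probability in $\|u\|_2$, then substituting $\|u\|_2=\Delta$) is exactly the paper's argument and is sound.

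Two cautions about your endgame, however. First, the parenthetical ``crude bound'' alternative does not work as stated: lower-bounding the threshold by $\sigma^2\alpha-\tfrac12\|u\|_2^2\geq\tfrac12\sigma^2\alpha$ and upper-bounding the standard deviation by $\sigma\Delta$ only reduces the bad event to $P\bigl(g>c(\beta)/2\bigr)\leq\tfrac12 e^{-c(\beta)^2/8}=\tfrac12(\beta/2)^{1/4}$, which exceeds $\beta$ for small $\beta$ (e.g.\ $\beta=0.1$ gives roughly $0.24$), so that shortcut must be discarded. Second, the worst-case threshold is $c(\beta)-\alpha/(2c(\beta))$, which is strictly smaller than $c(\beta)$, so the problem does not literally ``reduce to $P(g>c(\beta))\leq\beta/2$''; you must apply the tail bound at the shifted threshold. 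It still clears $\beta$: using $\alpha\leq 1$ and $\bigl(c-\tfrac{1}{2c}\bigr)^2\geq c^2-1$ with $c=c(\beta)=\sqrt{2\log(2/\beta)}$ gives $P\bigl(g>c-\tfrac{1}{2c}\bigr)\leq\tfrac12 e^{1/2}e^{-c^2/2}=\tfrac{\sqrt{e}}{4}\beta<\beta$, which is precisely the inequality the paper imports from Dwork et al. With that correction your argument is complete and coincides with the paper's proof; your closing symmetry remark (the roles of $D$ and $D'$ interchange) is correct and is left implicit in the paper.
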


\begin{proof}
Since the Gaussian measure on $\mathbb{R}^d$ admits the Lebesgue measure $\lambda$ as a $\sigma$-finite dominating measure
we consider the ratio of the densities
$$
\frac{dP_D(x)/d\lambda}{dP_{D^\prime}(x)/d\lambda} =
\text{exp}\left\{ \frac{\alpha^2}{2c(\beta)^2\Delta^2}
\left[ (x-v_{D^\prime})M^{-1}(x-v_{D^\prime}) - (x-v_D)^TM^{-1}(x-v_D)\right]\right\}.
$$
This ratio exceeds $e^\alpha$ only when
$$
2x^TM^{-1}(v_D-v_{D^\prime}) + v_{D^\prime}^TM^{-1}v_{D^\prime} - v_D^TM^{-1}v_D \geq 2\frac{c(\beta)^2\Delta^2}{\alpha}.
$$
We consider the probability of this set under $P_D$,
in which case we have $x = v_D + \frac{c(\beta)\Delta}{\alpha}M^{1/2}z$,
where $z$ is an isotropic normal with unit variance.  We have
$$
\frac{c(\beta)\Delta}{\alpha} z^TM^{-1/2}(v_D-v_{D^\prime}) \geq
\frac{c(\beta)^2\Delta^2}{\alpha^2} - \frac{1}{2}(v_D-v_{D^\prime})^TM^{-1}(v_D-v_{D^\prime}).
$$
Multiplying by $\frac{\alpha}{c(\beta)\Delta}$ and using (\ref{bound_cond}) gives
$$
z^TM^{-1/2}(v_D-v_{D^\prime}) \geq \frac{c(\beta)\Delta}{\alpha} - \frac{\alpha\Delta}{2c(\beta)}.
$$
Note that the left side is a normal random variable with mean zero and
variance smaller than $\Delta^2$.  The probability of this set is
increasing with the variance of said variable, and so we
examine the probability when the variance equals $\Delta^2$.  We also
restrict to $\alpha \leq 1$, and let $y \sim \mathcal{N}(0,1)$,
yielding
\begin{eqnarray*}
P\Biggl(z^TM^{-1/2}(v_D-v_{D^\prime}) \geq \frac{c(\beta)\Delta}{\alpha} - \frac{\alpha\Delta}{2c(\beta)}\Biggr)
&\leq & P\left(\Delta y \geq \frac{c(\beta)\Delta}{\alpha} - \frac{\alpha\Delta}{2c(\beta)}\right) \\
& \leq & P\left(y \geq c(\beta) - \frac{1}{2c(\beta)}\right)\\
&   \leq & \beta,\\
\end{eqnarray*}
where $c(\beta)$ is as defined in (\ref{eq_c_def}) and the final inequality is proved in \cite{Dwork:06}.
Thus lemma~\ref{lem_achieve_dp} gives the differential privacy.
\end{proof}

\begin{remark}
The quantity (\ref{bound_cond}) is a mild modification of the usual notion of
``sensitivity'' or ``global sensitivity'' \cite{DMNS06}.  It is nothing more than
the sensitivity measured in the Mahalanobis distance corresponding
to the matrix $M$.  The case $M=I$ corresponds to the usual
Euclidean distance, a setting that has been
studied previously by \cite{dp_recom}, among others.
\end{remark}



\subsection{The Implications of Approximate Differential Privacy}

The above definitions provide a strong privacy guarantee in the sense
that they aim to protect against an adversary having almost complete
knowledge of the private database.  Specifically, an adversary knowing
all but one of the data elements and having observed the output of a
private procedure, will remain unable to determine the identity of the
data element which is unknown to him.  To see this, we provide an analog
of theorem~2.4 of \cite{WZ}, who consider the case of $\alpha$-differential privacy.

Let the adversary's database be denoted by $D_A=(d_1,\ldots,d_{n-1})$, and
the private database by $D = (d_1,\ldots,d_n)$.  First note that
before observing the output of the private algorithm, the
adversary could determine that the private database $D$ lay in the
set $\left\{ (d_1,\ldots,d_{n-1},d) \in \mathcal{D}\right\}.$ Thus,
the private database comprises his data with one more element.  Since
all other databases may be excluded from consideration by the
adversary we concentrate on those in the above set.  In particular, we obtain the following analog of
theorem 2.4 of \cite{WZ}.

\begin{prop}
Let $X\sim P_D$ where the family $P_D$
achieves the $(\alpha,\beta)$-approximate DP.
Any level $\gamma$ test of: $H: D = D_0$ vs $V: D \neq D_0$ has power bounded above by $\gamma e^\alpha + \beta$.
\end{prop}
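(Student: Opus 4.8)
The plan is to represent a (possibly randomized) level $\gamma$ test as a measurable function $\phi:\Omega\to[0,1]$, where $\phi(x)$ is the probability of rejecting $H$ upon observing $x$. Then the size constraint reads $E_{D_0}[\phi(X)]\le\gamma$, and the power against an alternative $D$ is $E_{D}[\phi(X)]$. The observation that makes everything work is the one made in the paragraph preceding the statement: the adversary's uncertainty concerns a single record, so every alternative database $D\ne D_0$ under consideration shares its first $n-1$ entries with $D_0$ and therefore satisfies $D\sim D_0$. Consequently the $(\alpha,\beta)$-DP inequality \eqref{eq_dp} applies directly to the pair $(D,D_0)$, with no chaining across non-adjacent databases.

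First I would reduce from test functions to events by the layer-cake identity: since $\phi$ takes values in $[0,1]$ and is measurable,
\[
E_D[\phi(X)] = \int_0^1 P_D\bigl(\{x:\phi(x)\ge t\}\bigr)\,dt ,
\]
and similarly under $P_{D_0}$. Each superlevel set $A_t=\{x:\phi(x)\ge t\}$ belongs to $\mathcal{A}$, so \eqref{eq_dp} gives $P_D(A_t)\le e^\alpha P_{D_0}(A_t)+\beta$ for every $t\in[0,1]$. Integrating this pointwise bound over $t$ yields
\[
E_D[\phi(X)] \le e^\alpha\int_0^1 P_{D_0}(A_t)\,dt + \beta = e^\alpha\,E_{D_0}[\phi(X)] + \beta \le \gamma e^\alpha + \beta,
\]
where the last step uses the level constraint. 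For a non-randomized test the argument collapses to a single line: $\phi=\mathbf{1}_R$ for a rejection region $R\in\mathcal{A}$, and \eqref{eq_dp} applied to $R$ gives $P_D(R)\le e^\alpha P_{D_0}(R)+\beta\le\gamma e^\alpha+\beta$ immediately.

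There is no genuinely hard step; the two points that merit care are (i) recording that the alternatives in question are true neighbors of $D_0$, so that one application of \eqref{eq_dp} suffices and no error-amplifying composition is invoked, and (ii) passing from the set-based definition of differential privacy to randomized tests. The layer-cake representation handles (ii) cleanly; equivalently one could adjoin an independent $\mathrm{Uniform}[0,1]$ randomization variable, note that differential privacy is preserved under this product construction by Fubini, and then treat the test as non-randomized on the enlarged space.
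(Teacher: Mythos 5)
Your argument is correct and follows essentially the same route as the paper: the rejection region is a measurable set, so a single application of the $(\alpha,\beta)$-DP inequality to the adjacent pair $(D,D_0)$ together with the level constraint gives the bound. The layer-cake extension to randomized tests is a harmless (and valid) generalization of the paper's one-line argument, which treats the non-randomized case.
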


\noindent
The above result follows immediately from noting that the rejection region of the
test is a measurable set in the space and so obeys the constraint
of the differential privacy.  The implication of the above proposition
is that the power of the test will be bounded close to its size.
When $\alpha,\beta$ are small, this means that the test is close to
being ``trivial'' in the sense that it is no more likely to correctly
reject a false hypothesis than it is to incorrectly reject the true
one.

\section{Approximate Differential Privacy for Functions}
\label{section::functions}

The goal of the release a function raises a number of questions.  First
what does it mean for a computer program to output a function?  Second, how can
the differential privacy be demonstrated?  In this section we continue to treat
randomized algorithms as measures, however now they are measures over function spaces.
In section~\ref{section::algorithms} we demonstrate concrete algorithms, which in
essence output the function on any arbitrary countable set of points.

We cannot expect the techniques for finite dimensional vectors
to apply directly when dealing with functions.  The reason is that $\sigma$-finite
dominating measures of the space of functions do not exist, and, therefore, neither do densities.
However, there exist probability measures on the spaces of
functions.  Below, we demonstrate the approximate differential privacy of
measures on function spaces, by considering random variables which
correspond to evaluating the random function on a finite set of
points.

We consider the family of functions over $T = \mathbb{R}^d$ (where appropriate we
may restrict to a compact subset such as the unit cube in $d$-dimensions):
$$
\{f_D: D \in\mathcal{D} \} \subset \mathbb{R}^T.
$$

A before, we consider randomized algorithms which on input $D$,
output some $\tilde{f}_D \sim P_D$
where $P_D$ is a measure on $\mathbb{R}^T$ corresponding to $D$.
The nature of the $\sigma$-field on this space will be
described below.

\subsection{Differential Privacy on the Field of Cylinders}

We define the ``cylinder sets'' of functions
(see \cite{Billingsley95}) for all finite subsets $S = (x_1,\ldots,x_n)$ of
$T$, and Borel sets $B$ of $\mathbb{R}^n$
$$
C_{S,B} = \left\{f \in \mathbb{R}^T:  (f(x_1),\ldots,f(x_n)) \in B\right\}.
$$
These are just those functions which take values in prescribed sets,
at those points in $S$.  The family of sets: $\mathcal{C}_S =
\{C_{S,B} : B\in\mathcal{B}(\mathbb{R}^n)\}$ forms a $\sigma$-field
for each fixed $S$, since it is the preimage of $\mathcal{B}(\mathbb{R}^n)$
under the operation of evaluation on
the fixed finite set $S$.  Taking the union over all finite
sets $S$ yields the collection
$$
\mathcal{F}_0 = \bigcup_{S : |S| < \infty}\mathcal{C}_S.
$$
This is a field (see \cite{Billingsley95} page
508) although not a $\sigma$-field, since it does not have the
requisite closure under countable intersections (namely it
does not contain cylinder sets for which $S$ is countably infinite).
We focus on the creation of algorithms for which the differential
privacy holds over the field of cylinder sets, in the sense that, for all $D \sim D^\prime \in \mathcal{D}$,
\begin{equation}
\label{eq_cylinder_dp}
 P(\tilde{f}_D \in A) \leq e^\alpha P(\tilde{f}_{D^\prime} \in A) + \beta, \quad \forall A \in \mathcal{F}_0.
\end{equation}

This statement appears to be prima facie unlike the definition (\ref{eq_dp}), since $\mathcal{F}_0$ is not
a $\sigma$-field on $\mathbb{R}^T$.
However, we
give a limiting argument which demonstrates that to satisfy
(\ref{eq_cylinder_dp}) is to achieve the approximate $(\alpha,\beta)$-DP throughout the
generated $\sigma$-field.  First we note that satisfying (\ref{eq_cylinder_dp}) implies that
the release of any finite evaluation of the function achieves the differential privacy.
Since for any finite $S\subset T$, we have
that $\mathcal{C}_S \subset \mathcal{F}_0$, we readily obtain the following result.

\begin{prop}
\label{prop_func_finite}
Let $x_1,\ldots,x_n$ be any finite set of points in $T$ chosen
a-priori.  Then whenever (\ref{eq_cylinder_dp}) holds, the release of
the vector
$$
\left(\tilde{f}_D(x_1),\ldots,\tilde{f}_D(x_n)\right)
$$
satisfies the $(\alpha,\beta)$-DP.
\end{prop}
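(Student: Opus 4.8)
The plan is to realize the released vector as the image of $P_D$ under a fixed evaluation map and then transport the cylinder-set inequality (\ref{eq_cylinder_dp}) through that map. Concretely, fix $S = (x_1,\ldots,x_n)$ and consider the evaluation map $e_S : \mathbb{R}^T \to \mathbb{R}^n$ given by $e_S(f) = (f(x_1),\ldots,f(x_n))$. By the very definition of the cylinder sets, $e_S^{-1}(B) = C_{S,B}$ for every $B \in \mathcal{B}(\mathbb{R}^n)$, and $\mathcal{C}_S = \{C_{S,B} : B \in \mathcal{B}(\mathbb{R}^n)\} \subseteq \mathcal{F}_0$; hence $e_S$ is measurable from $(\mathbb{R}^T,\mathcal{F}_0)$ to $(\mathbb{R}^n,\mathcal{B}(\mathbb{R}^n))$, and the law $Q_D$ of $\bigl(\tilde f_D(x_1),\ldots,\tilde f_D(x_n)\bigr)$ is exactly the pushforward $P_D \circ e_S^{-1}$.

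The second step is to apply the hypothesis. For any Borel set $B \subseteq \mathbb{R}^n$ and any $D \sim D^\prime$, take $A = C_{S,B} \in \mathcal{F}_0$ in (\ref{eq_cylinder_dp}); this yields
$$
Q_D(B) = P(\tilde f_D \in C_{S,B}) \leq e^\alpha P(\tilde f_{D^\prime} \in C_{S,B}) + \beta = e^\alpha Q_{D^\prime}(B) + \beta .
$$
Since $B$ was an arbitrary Borel set, the family $\{Q_D\}$ satisfies (\ref{eq_dp}) on $\mathcal{B}(\mathbb{R}^n)$, that is, it achieves $(\alpha,\beta)$-DP as a family of $\mathbb{R}^n$-valued algorithms.

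I do not expect any real obstacle here: the entire content is the bookkeeping observation $\mathcal{C}_S \subseteq \mathcal{F}_0$, which is already built into the construction of $\mathcal{F}_0$. The only point worth a sentence is the choice of $\sigma$-field on $\mathbb{R}^n$ in the definition of DP, namely the completion of $\mathcal{B}(\mathbb{R}^n)$ rather than $\mathcal{B}(\mathbb{R}^n)$ itself; extending the inequality to the completion is routine, since every set in the completion differs from a Borel set by a null set, and in the cases of interest each $Q_D$ is absolutely continuous with respect to Lebesgue measure, so the completion is just the Lebesgue $\sigma$-field and the bound passes through verbatim. Alternatively one may simply phrase the conclusion for Borel sets, which is all that is used subsequently.
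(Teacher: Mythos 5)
Your proof is correct and is essentially the paper's own argument: both identify the event that the released vector lies in a Borel set $B$ with the event that $\tilde f_D$ lies in the cylinder set $C_{S,B} \in \mathcal{F}_0$, and then invoke (\ref{eq_cylinder_dp}). The extra remarks about the evaluation map and the completion of $\mathcal{B}(\mathbb{R}^n)$ are harmless bookkeeping beyond what the paper records.
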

\begin{proof}
We have that
$$P_D\left(\left(\tilde{f}(x_1),\ldots,\tilde{f}(x_n)\right) \in A\right) = P_D(\tilde{f} \in C_{\{x_1,\ldots,x_n\}, A}).$$
\noindent The claimed privacy guarantee follows from (\ref{eq_cylinder_dp}).
\end{proof}

We now give a limiting argument to extend (\ref{eq_cylinder_dp}) to the
generated $\sigma$-field (or, equivalently, the $\sigma$-field generated by the
cylinders of dimension 1)
$$
\mathcal{F} \stackrel{\text{def}}{=} \sigma(\mathcal{F}_0) = \bigcup_S \mathcal{C}_S
$$
where the union extends over all the countable subsets $S$ of $T$. The second equality above is due to \cite{Billingsley95} theorem 36.3
part ii.

Note that, for countable $S$, the cylinder sets take the form
$$
C_{S,B} = \left\{f \in \mathbb{R}^T : f(x_i) \in B_i,\ i = 1,2,\ldots \right\} = \bigcap_{i=1}^\infty C_{\{x_i\},B_i},
$$
where $B_i$'s are Borel sets of $\mathbb{R}$.

\begin{prop}
\label{prop_infinite}
Let (\ref{eq_cylinder_dp}) hold. Then, the family $\{P_D: D \in\mathcal{D}\}$ on $(\mathbb{R}^T,\mathcal{F})$ satisfies for all $D \sim D^\prime \in \mathcal{D}$:
\begin{equation}
\label{eq_field_dp}
 P_D(A) \leq e^\alpha P_{D^\prime}(A) + \beta, \quad \forall A \in \mathcal{F}.
\end{equation}
\end{prop}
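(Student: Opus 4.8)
The plan is to extend the differential privacy inequality from the field $\mathcal{F}_0$ to the generated $\sigma$-field $\mathcal{F}$ via a monotone class / limiting argument, using the fact (cited from Billingsley) that every set in $\mathcal{F}$ is a cylinder over some \emph{countable} subset $S \subset T$. So fix $D \sim D^\prime$ and fix an arbitrary $A \in \mathcal{F}$; by the structural description of $\mathcal{F}$ there is a countable $S = \{x_1, x_2, \ldots\} \subset T$ with $A \in \mathcal{C}_S$, i.e. $A = \{f : (f(x_1), f(x_2), \ldots) \in B\}$ for some Borel $B \subseteq \mathbb{R}^{\mathbb{N}}$ (with the product $\sigma$-field). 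The key device is to approximate $A$ from the inside by its finite-dimensional ``truncations'': let $S_k = \{x_1,\ldots,x_k\}$ and let $A_k = \{f : (f(x_1),\ldots,f(x_k)) \in \pi_k(B)\}$ where $\pi_k$ is projection onto the first $k$ coordinates — or, more carefully, intersect $A$ with the sub-$\sigma$-field generated by the first $k$ evaluations. The cleanest route is to invoke a standard result: the push-forward measures $P_D \circ (\text{ev}_S)^{-1}$ and $P_{D^\prime} \circ (\text{ev}_S)^{-1}$ on $\mathbb{R}^{\mathbb{N}}$ are determined by their finite-dimensional marginals, and the collection of Borel sets $B$ for which $P_D(\text{ev}_S^{-1}B) \le e^\alpha P_{D^\prime}(\text{ev}_S^{-1}B) + \beta$ is a monotone class containing all finite-dimensional cylinders (by Proposition~\ref{prop_func_finite} applied to $S_k$), hence contains all of $\mathcal{B}(\mathbb{R}^{\mathbb{N}})$.

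Concretely, I would carry out the steps in this order. First, reduce to the countable-index case by citing the displayed description $\mathcal{F} = \bigcup_S \mathcal{C}_S$ over countable $S$. Second, for a fixed countable $S$, observe that $\mathcal{C}_S$ is exactly the $\sigma$-field generated by the increasing sequence of fields $\mathcal{C}_{S_k}$, $k = 1, 2, \ldots$; the union $\bigcup_k \mathcal{C}_{S_k}$ is a field generating $\mathcal{C}_S$. Third, note that by (\ref{eq_cylinder_dp}) — equivalently by Proposition~\ref{prop_func_finite} — the inequality $P_D(A') \le e^\alpha P_{D^\prime}(A') + \beta$ holds for every $A'$ in that generating field. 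Fourth, apply Dynkin's $\pi$-$\lambda$ theorem (or the monotone class theorem for sets): the class $\mathcal{L} = \{A' \in \mathcal{C}_S : P_D(A') \le e^\alpha P_{D^\prime}(A') + \beta\}$ is a $\lambda$-system — it contains $\Omega = \mathbb{R}^T$, and it is closed under increasing countable unions and (because $\alpha,\beta$ appear with the right signs and $P_D,P_{D^\prime}$ are finite measures) the relevant limiting operations — and it contains the $\pi$-system of finite-dimensional cylinders, so $\mathcal{L} \supseteq \mathcal{C}_S$. Since $A$ was an arbitrary element of an arbitrary $\mathcal{C}_S$, and $D \sim D^\prime$ was arbitrary, (\ref{eq_field_dp}) follows.

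The step requiring the most care — and the main obstacle — is verifying that $\mathcal{L}$ is genuinely a $\lambda$-system, in particular closure under monotone increasing limits: if $A_k \uparrow A$ with each $A_k \in \mathcal{L}$, then $P_D(A) = \lim_k P_D(A_k) \le \lim_k \big(e^\alpha P_{D^\prime}(A_k) + \beta\big) = e^\alpha P_{D^\prime}(A) + \beta$, which is fine by continuity from below of the finite measures $P_D, P_{D^\prime}$; the subtlety is that the $\pi$-$\lambda$ machinery requires one to express every element of $\mathcal{C}_S$ as a monotone limit (or complement/difference) of finite cylinders, which is exactly the content of $\mathcal{C}_S$ being generated by the $\mathcal{C}_{S_k}$. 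One must also be slightly careful that $\mathcal{C}_S$ for countable $S$ is a bona fide $\sigma$-field (it is, being the preimage of the product Borel $\sigma$-field on $\mathbb{R}^{\mathbb{N}}$ under the evaluation map $f \mapsto (f(x_i))_i$), so that Dynkin applies. Everything else is bookkeeping: the inequality direction is preserved under all the operations because $e^\alpha \ge 1 > 0$ and $\beta \ge 0$, and no issue of measurability arises since we work entirely inside $\mathcal{C}_S$.
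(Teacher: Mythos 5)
Your reduction to a countable index set $S$ and your introduction of the field $\bigcup_k \mathcal{C}_{S_k}$ are fine, but the step in which you invoke Dynkin's $\pi$-$\lambda$ theorem does not go through as stated: the class $\mathcal{L} = \{A' \in \mathcal{C}_S : P_D(A') \leq e^\alpha P_{D^\prime}(A') + \beta\}$ is \emph{not} a $\lambda$-system, because a $\lambda$-system must be closed under complements (equivalently, proper differences), and the differential-privacy inequality is not preserved by complementation. For instance, if $P_D(A') = 0$ and $P_{D^\prime}(A') = 0.9$ then $A' \in \mathcal{L}$, yet $P_D\bigl((A')^c\bigr) = 1$ while $e^\alpha P_{D^\prime}\bigl((A')^c\bigr) + \beta \leq e \cdot 0.1 + \beta < 1$ for small $\beta$ (recall $\alpha \leq 1$), so $(A')^c \notin \mathcal{L}$. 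Consequently you cannot conclude $\mathcal{L} \supseteq \mathcal{C}_S$ from the $\pi$-$\lambda$ theorem, and containing merely the $\pi$-system of finite-dimensional cylinders is not sufficient; this is exactly the closure property you left unverified under the phrase ``the relevant limiting operations.''

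The parenthetical alternative you mention, the monotone class theorem for fields, does work and needs only what you have already set up: $\mathcal{L}$ is a monotone class, being closed under increasing limits (continuity from below, as you checked) \emph{and} under decreasing limits (continuity from above, valid since the $P_D$ are probability measures), and by (\ref{eq_cylinder_dp}) it contains the entire field $\bigcup_k \mathcal{C}_{S_k}$, not just a $\pi$-system; the monotone class theorem then yields $\mathcal{L} \supseteq \sigma\bigl(\bigcup_k \mathcal{C}_{S_k}\bigr) = \mathcal{C}_S$. With that repair your argument is correct and is genuinely different from the paper's: the paper fixes a countable cylinder of product form $\bigcap_i C_{\{x_i\},B_i}$, writes it as the decreasing limit of the finite intersections $C_{S,B,n}$, and passes to the limit with an explicit $\epsilon$ argument, whereas the corrected monotone-class route covers every element of $\mathcal{C}_S$ (arbitrary Borel $B \subseteq \mathbb{R}^{\mathbb{N}}$, not only product-form cylinders), so once fixed it is in that respect more complete than the paper's own treatment.
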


\begin{proof}
\noindent Define
$C_{S,B,n} = \bigcap_{i=1}^n C_{\{t_i\},B_i}.$
Then, the sets $C_{S,B,n}$ form a sequence of sets which decreases towards $C_{S,B}$ and
$C_{S,B} = \lim_{n\to\infty}C_{S,B,n}.$
Since the sequence of sets is decreasing and the measure in question is a probability (hence bounded above by 1), we have
$$
P_D(C_{S,B}) = P_D(\lim_{n\to\infty}C_{S,B,n})
= \lim_{n\to\infty} P_D(C_{S,B,n}).
$$
Therefore, for each pair $D\sim D^\prime$ and for every $\epsilon > 0$,
there exists an $n_0$ so that for all $n \geq n_0$
$$
\left|P_D(C_{S,B})-P_D(C_{S,B,n})\right| \leq \epsilon,\quad
\left|P_{D^\prime}(C_{S,B})-P_{D^\prime}(C_{S,B,n})\right| \leq \epsilon.
$$
The number $n_0$ depends on whichever is the slowest sequence to converge.  Finally we obtain
\begin{align*}
P_D(C_{S,B}) &\leq P_D(C_{S,B,n_0}) + \epsilon \\
             &\leq e^\alpha P_{D^\prime}(C_{S,B,n_0}) + \beta + \epsilon\\
             &\leq e^\alpha P_{D^\prime}(C_{S,B}) + \beta + (1+e^\alpha)\epsilon\\
             &\leq e^\alpha P_{D^\prime}(C_{S,B}) + \beta + 3\epsilon.
\end{align*}
Since this holds for all $\epsilon > 0$ we conclude that
$P_D(C_{S,B}) \leq e^\alpha P_{D^\prime}(C_{S,B}) + \beta.$
\end{proof}

In principle, if it were possible for a computer to release a
complete description of the function $\tilde{f}_D$ then this result would
demonstrate the privacy guarantee achieved by our algorithm.  In practise
a computer algorithm which runs in a finite amount of time may only output
a finite set of points, hence this result is mainly of theoretical interest.
However, in the case in which the functions to be output are continuous, and the
restriction is made that $P_D$ are measures over $C[0,1]$ (the continuous
functions on the unit interval), another description of the $\sigma$-field becomes
available.  Namely, the restriction of $\mathcal{F}$ to the elements of $C[0,1]$
corresponds to the borel $\sigma$-field over $C[0,1]$ with the topology induced
by the uniform norm ($\|f\|_\infty = \sup_t|f(t)|$).  Therefore in the case of
continuous functions, differential privacy over $\mathcal{F}_0$ hence leads to
differential privacy throughout the borel $\sigma$-field.

In summary, we find that if every finite dimensional projection of the
released function satisfies  differential privacy, then so does
every countable-dimensional projection.  We now explore techniques which achieve the
differential privacy over these $\sigma$-fields.

\subsection{Differential Privacy via the Exponential Mechanism}

A straightforward means to output a function in a way which achieves the differential privacy is to make use of the so-called ``exponential mechanism'' of \cite{MT07}.  This approach entails the construction of a suitable finite set of functions $G = \{g_i,\ldots,g_m\} \in \mathbb{R}^T$, in which every $f_D$ has a reasonable approximation, under some distance function $d$.  Then, when the input is $D$, a function is chosen to output by sampling the set of $G$ with probabilities given by
$$
P_D(g_i) \propto \text{exp}\left\{\frac{-\alpha}{2}d(g_i,f_D) \right\}.
$$
\cite{MT07} demonstrate that such a technique achieves the $\alpha$-differential privacy, which is strictly stronger than the $(\alpha,\beta)$-differential privacy we consider here.  Although this technique is conceptually appealing for its simplicity, it remains challenging to use in practise since the set of functions $G$ may need to be very large in order to ensure the utility of the released function (in the sense of expected error).  Since the algorithm which outputs from $P_D$ must obtain the normalization constant to the distribution above, it must evidently compute the probabilities for each $g_i$, which may be extremely time consuming.  Note that techniques such as importance sampling are also difficult to bring to bear against this problem when it is important to maintain utility.

The technique given above can be interpreted as outputting a discrete random variable, and fulfilling privacy definition with respect to the $\sigma$-field consisting of the powerset of $G$.  This implies the privacy with respect to the cylinder sets, since the restriction of each cylinder set to the elements of $G$ corresponds some subset of $G$.

We note that the exponential mechanism above essentially corresponded to a discretization of the function space $\mathbb{R}^T$.  An alternative is to discretize the input space $T$, and to approximate the function by a piecewise constant function where the pieces correspond to the discretization of $T$.  Thereupon the approximation may be regarded as a real valued vector, with one entry for the value of each piece of the function.  This is conceptually appealing but it remains to be seen whether the sensitivity of such a vector valued output could be bounded.  In the next section we describe a method which may be regarded as similar to the above, and which has the nice property that the choice of discretization is immaterial to the method and to the determination of sensitivity.

\subsection{Differential Privacy via Gaussian Process Noise}

We propose to use measures $P_D$ over functions, which are Gaussian
processes.  The reason is that there is a strong connection between
these measures over the infinite dimensional function space, and the
Gaussian measures over finite dimensional vector spaces such as those
used in Proposition~\ref{prop_dp_vec}.  Therefore, with some additional
technical machinery which we will illustrate next, it is possible to move from differentially private
measures over vectors to those over functions.

A Gaussian process indexed by $T$  is a collection
of random variables $\{X_t : t \in T\}$, for which each finite subset
is distributed as a multivariate Gaussian (see, for instance, \cite{Adler90,Adler07}).  A sample from a Gaussian
process may be considered as a function $T\to\mathbb{R}$, by examining
the so-called ``sample path'' $t \to X_t$.  The Gaussian process is
determined by the mean and covariance functions, defined on $T$ and $T^2$ respectively, as
$$
m(t) = \mathbb{E}X_t,\quad K(s,t) = \text{Cov}(X_s,X_t).
$$
For any finite subset $S\subset T$,  the random vector  $\{X_t: t \in S\}$ has a normal distribution with the means,
variances, and covariances given by the above functions.  Such a
``finite dimensional distribution'' may be regarded as a projection of
the Gaussian process.  Below we propose particular mean and covariance
functions for which Proposition~\ref{prop_dp_vec} will hold for all
finite dimensional distributions.  These will require some
smoothness properties of the family of functions $\{f_D\}$.   We
 first demonstrate the technical machinery which allows us to move from
finite dimensional distributions to distributions on the function
space, and then we give differentially private measures on function spaces
of one dimension. Finally, we extend our results to multiple dimensions.

\begin{prop}
\label{prop_dp_func}
Let $G$ be a sample path of a Gaussian process having mean zero and
covariance function $K$.  Let $\{f_D: D\in\mathcal{D}\}$ be a family of functions
indexed by databases.  Then the release of
$$
\tilde{f}_D = f_D + \frac{\Delta c(\beta)}{\alpha}G
$$
is $(\alpha,\beta)$-differentially private (with respect to the cylinder $\sigma$-field $\mathcal{F}$) whenever

\begin{equation}
\label{finite_sens}
\sup_{D\sim D^\prime} \sup_{n < \infty} \sup_{(x_1,\ldots,x_n) \in T^n}
\left\|
\left(
\begin{array}{ccc}
K(x_1,x_1) & \cdots & K(x_1,x_n) \\
\vdots & \ddots & \vdots \\
K(x_n,x_1) & \cdots & K(x_n,x_n)
\end{array}
\right)^{-1/2}
\left(\begin{array}{c}
f_D(x_1) - f_{D^\prime}(x_1) \\ \vdots \\ f_D(x_n) - f_{D^\prime}(x_n)
\end{array}\right)\right\|_2
\leq \Delta.
\end{equation}
\end{prop}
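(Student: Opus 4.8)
The plan is to reduce the infinite-dimensional statement to the finite-dimensional Gaussian mechanism of Proposition~\ref{prop_dp_vec}, using the two lemmas already established for cylinder $\sigma$-fields. By Proposition~\ref{prop_infinite} it suffices to verify the cylinder-field inequality~(\ref{eq_cylinder_dp}); and since every element of $\mathcal{F}_0$ is a cylinder $C_{S,B}$ over some finite set $S = (x_1,\ldots,x_n) \subset T$ and some Borel $B \subseteq \mathbb{R}^n$, the entire argument will, for each fixed $S$, take place inside $\mathbb{R}^n$ via Proposition~\ref{prop_func_finite}.

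First I would observe that the evaluation vector satisfies $(\tilde{f}_D(x_1),\ldots,\tilde{f}_D(x_n)) = v_D + \tfrac{\Delta c(\beta)}{\alpha} Z$, where $v_D = (f_D(x_1),\ldots,f_D(x_n))^T$ and $Z = (G(x_1),\ldots,G(x_n))^T$, and that $Z \sim \mathcal{N}_n(0,M)$ with $M = (K(x_i,x_j))_{i,j=1}^n$, since finite-dimensional projections of a mean-zero Gaussian process with covariance $K$ are exactly multivariate Gaussians with that covariance matrix. This is verbatim the randomized algorithm of Proposition~\ref{prop_dp_vec}, and the hypothesis~(\ref{finite_sens}), specialized to this one point configuration, is precisely its sensitivity bound~(\ref{bound_cond}), namely $\|M^{-1/2}(v_D - v_{D^\prime})\|_2 \leq \Delta$. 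Hence Proposition~\ref{prop_dp_vec}, with $c(\beta)$ as in~(\ref{eq_c_def}), gives $P((\tilde{f}_D(x_1),\ldots,\tilde{f}_D(x_n)) \in B) \leq e^\alpha P((\tilde{f}_{D^\prime}(x_1),\ldots,\tilde{f}_{D^\prime}(x_n)) \in B) + \beta$ for every Borel $B$. By the identity $P(\tilde{f}_D \in C_{S,B}) = P((\tilde{f}_D(x_1),\ldots,\tilde{f}_D(x_n)) \in B)$ from the proof of Proposition~\ref{prop_func_finite}, this is exactly~(\ref{eq_cylinder_dp}) for $C_{S,B}$; since $S$ and $B$ are arbitrary, (\ref{eq_cylinder_dp}) holds on all of $\mathcal{F}_0$, and Proposition~\ref{prop_infinite} then upgrades it to $(\alpha,\beta)$-DP on $\mathcal{F}$.

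I expect the main obstacle to be that Proposition~\ref{prop_dp_vec} assumes $M$ is \emph{strictly} positive definite, whereas $(K(x_i,x_j))_{i,j}$ may be only positive semidefinite (repeated points, or a kernel that is degenerate at $S$). Here~(\ref{finite_sens}) already does the necessary work: a finite Mahalanobis norm forces $v_D - v_{D^\prime}$ to lie in the range of $M$. So the fix is to pass to a maximal sub-collection of the $x_i$ whose principal covariance submatrix $\bar M$ is nonsingular (this has size $\mathrm{rank}(M)$), observe that under \emph{both} $P_D$ and $P_{D^\prime}$ the omitted evaluations are almost surely the \emph{same} fixed affine function of the retained ones — the linear part being the usual conditional-expectation map determined by $M$ alone, and the constant parts agreeing precisely because $v_D - v_{D^\prime}$ lies in the range of $M$ — so that $\{(\tilde{f}_D(x_1),\ldots,\tilde{f}_D(x_n)) \in B\}$ coincides up to a null set with an event $\{(\tilde{f}_D(x_{i_1}),\ldots,\tilde{f}_D(x_{i_k})) \in \Phi^{-1}(B)\}$ on the reduced vector. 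One then applies Proposition~\ref{prop_dp_vec} to the reduced nondegenerate mechanism, whose sensitivity with respect to $\bar M$ is still at most $\Delta$ by the same specialization of~(\ref{finite_sens}) to the sub-configuration, and lifts back. This reduction is routine but needs to be stated; in the kernels used in Section~\ref{section::examples}, $M$ is nonsingular for distinct points and the issue does not arise. Everything else — the translation between cylinder sets and subsets of $\mathbb{R}^n$ and the appeals to Propositions~\ref{prop_func_finite} and~\ref{prop_infinite} — is bookkeeping.
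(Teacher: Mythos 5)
Your proposal is correct and follows essentially the same route as the paper's own proof: reduce to finite-dimensional evaluations, note that $(G(x_1),\ldots,G(x_n))\sim\mathcal{N}_n(0,(K(x_i,x_j))_{i,j})$ so that (\ref{finite_sens}) is exactly the sensitivity bound (\ref{bound_cond}) of Proposition~\ref{prop_dp_vec}, translate cylinder sets into Borel events on the evaluation vector, and invoke Proposition~\ref{prop_infinite} to pass from $\mathcal{F}_0$ to $\mathcal{F}$. Your additional treatment of a possibly singular covariance matrix is a careful refinement of a point the paper implicitly assumes away (invertibility of the Gram matrix), but it does not change the underlying argument.
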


\begin{proof}
For any finite set $(x_1,\ldots,x_n)\in T^n$, the vector
$\left(G(x_1),\ldots,G(x_n)\right)$ follows a multivariate normal
distribution having mean zero and covariance matrix specified by
$\text{Cov}(G(x_i),G(x_j)) = K(x_i,x_j)$.  Thus for the vector obtained
by evaluation of $\tilde{f}$ at those points, differential privacy is
demonstrated by Proposition~\ref{prop_dp_vec} since (\ref{finite_sens}) implies
the sensitivity bound (\ref{bound_cond}).
Thus, for any $n<\infty$ and any $(x_1,\ldots,x_n) \in T^n$ we have
$B\in\mathcal{B}(\mathbb{R}^n)$
$$
P_D\left(\left(\tilde{f}(x_1),\ldots,\tilde{f}(x_n)\right)\in B\right) \leq
e^\alpha P_{D^\prime}\left(\left(\tilde{f}(x_1),\ldots,\tilde{f}(x_n)\right)\in B\right) + \beta
$$

Finally note that for any $A\in\mathcal{F}_0$, we may write $A =
C_{X_n,B}$ for some finite $n$, some vector $X_n = (x_1,\ldots,x_n) \in T^n$ and some borel
set $B$.  Then
$$
P_D(\tilde{f}\in A) = P_D\left(\left(\tilde{f}(x_1),\ldots,\tilde{f}(x_n)\right)\in B\right).
$$
Combining this with the above gives the requisite privacy statement
 for all $A \in \mathcal{F}_0$.  Proposition~\ref{prop_infinite} carries
this to $\mathcal{F}$.
\end{proof}

\subsection{Functions in a Reproducing Kernel Hilbert Space}

When the family of functions lies in the reproducing kernel Hilbert
space (RKHS) which corresponds to the covariance kernel of the
Gaussian process, then establishing upper bounds of the form (\ref{finite_sens})
is simple.  Below, we give some basic definitions
for RKHSs, and refer the reader to \cite{rkhsprob} for a more detailed account.
We first recall that the
RKHS is generated from the closure of those functions which can be represented
as finite linear combinations of the kernel, i.e.,
$$
\mathcal{H}_0 = \left\{ \sum_{i = 1}^n \xi_i K_{x_i} \right\}
$$
for some finite $n$ and
sequence $\xi_i\in\mathbb{R},\ x_i \in T$, and where $K_x = K(x,\cdot)$.
For two functions $f = \sum_{i=1}^n\theta_iK_{x_i}$ and $g=\sum_{j=1}^m\xi_jK_{y_j}$ the inner product is given by
$$
\inner{f}{g}_\mathcal{H} = \sum_{i=1}^n\sum_{j=1^m}\theta_i\xi_jK(x_i,y_j),
$$
and the corresponding norm of $f$ is
$\|f\|_\mathcal{H} = \sqrt{\inner{f}{f}_\mathcal{H}}.$
This gives rise to the ``reproducing'' nature of the Hilbert space, namely,
$\inner{K_x}{K_y}_\mathcal{H} = K(x,y)$.
Furthermore, the functionals $\inner{K_x}{\cdot}_\mathcal{H}$ correspond to point evaluation, i.e.
$$
\inner{K_x}{f}_\mathcal{H} = \sum_{i=1}^n\theta_iK(x_i,x) = f(x).
$$
The RKHS $\mathcal{H}$ is then the closure of $\mathcal{H}_0$
with respect to the RKHS norm.  We now present the main theorem which suggests
an upper bound of the form required in Proposition~\ref{prop_dp_func}.

\begin{prop}
\label{prop_dp_rkhs}
For $f \in \mathcal{H}$, where $\mathcal{H}$ is the RKHS corresponding to the kernel $K$,
and for any finite sequence $x_1,\ldots,x_n$ of distinct points in $T$, we have:
$$
\left\|
\left(
\begin{array}{ccc}
K(x_1,x_1) & \cdots & K(x_1,x_n) \\
\vdots & \ddots & \vdots \\
K(x_n,x_1) & \cdots & K(x_n,x_n)
\end{array}
\right)^{-1/2}
\left(\begin{array}{c}
f(x_1) \\ \vdots \\ f(x_n)
\end{array}\right)\right\|_2
\leq \|f\|_\mathcal{H}.
$$
\end{prop}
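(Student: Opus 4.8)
The plan is to reduce the statement to a finite-dimensional computation by projecting $f$ onto the span of the representers at the sample points. Write $\mathbf{K}$ for the $n\times n$ Gram matrix with entries $K(x_i,x_j)$ (which we take to be invertible, as the statement presumes in writing $\mathbf{K}^{-1/2}$), and $\mathbf{f} = (f(x_1),\dots,f(x_n))^\top$. First I would record the one fact about the closure that matters: point evaluation is the bounded linear functional $\innerh{K_x}{f} = f(x)$, and $|f(x)| \le \sqrt{K(x,x)}\,\|f\|_\mathcal{H}$ by Cauchy--Schwarz, so the identity $f(x)=\innerh{K_x}{f}$ persists on all of $\mathcal{H}$ and not merely on $\mathcal{H}_0$.

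Next, let $V = \mathrm{span}\{K_{x_1},\dots,K_{x_n}\}$, a finite-dimensional (hence closed) subspace of $\mathcal{H}$, and take the orthogonal decomposition $f = g + h$ with $g \in V$ and $h \perp V$ furnished by the Hilbert projection theorem. Since $h$ is orthogonal to each $K_{x_i}$, we have $f(x_i) = \innerh{K_{x_i}}{f} = \innerh{K_{x_i}}{g} = g(x_i)$ for every $i$, so the evaluation vector $\mathbf{f}$ is unchanged when $f$ is replaced by $g$; and $\|f\|_\mathcal{H}^2 = \|g\|_\mathcal{H}^2 + \|h\|_\mathcal{H}^2 \ge \|g\|_\mathcal{H}^2$. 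Hence it suffices to prove the claim — in fact with equality — for $f = g \in V$.

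For $g = \sum_{j=1}^n \alpha_j K_{x_j}$ with $\alpha \in \mathbb{R}^n$, the reproducing property gives $g(x_i) = \sum_j \alpha_j K(x_i,x_j) = (\mathbf{K}\alpha)_i$, so $\mathbf{f} = \mathbf{K}\alpha$, while the definition of the RKHS inner product gives $\|g\|_\mathcal{H}^2 = \sum_{i,j}\alpha_i\alpha_j K(x_i,x_j) = \alpha^\top \mathbf{K}\alpha$. Therefore
$$
\|\mathbf{K}^{-1/2}\mathbf{f}\|_2^2 = \|\mathbf{K}^{-1/2}\mathbf{K}\alpha\|_2^2 = \alpha^\top \mathbf{K}\alpha = \|g\|_\mathcal{H}^2 ,
$$
and combining with the previous paragraph, $\|\mathbf{K}^{-1/2}\mathbf{f}\|_2 = \|g\|_\mathcal{H} \le \|f\|_\mathcal{H}$, as required.

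The one place I would tread carefully is the invertibility of $\mathbf{K}$: distinctness of the $x_i$ makes the representers distinct but not necessarily linearly independent unless $K$ is strictly positive definite. If $\mathbf{K}$ is merely positive semidefinite one should read $\mathbf{K}^{-1/2}$ as the pseudoinverse acting on $\mathrm{range}(\mathbf{K})$; the same projection argument — now additionally collapsing $V$ onto a linearly independent subfamily of the $K_{x_i}$ — still yields the bound. Everything else is just bookkeeping with the projection theorem and the two defining identities of the RKHS, so I do not expect a genuine obstacle.
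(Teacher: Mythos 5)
Your proposal is correct and is essentially the paper's argument: the paper builds the orthogonal projection onto $\mathrm{span}\{K_{x_1},\dots,K_{x_n}\}$ explicitly as the operator $P=\sum_{i,j}K_{x_i}(M^{-1})_{ij}\innerh{K_{x_j}}{\cdot}$, checks idempotence and self-adjointness by hand, and reads off the left-hand side squared as $\innerh{f}{Pf}\le\|f\|_\mathcal{H}^2$, which is exactly your decomposition $f=g+h$ plus the Gram-matrix computation $\|\mathbf{K}^{-1/2}\mathbf{K}\alpha\|_2^2=\alpha^\top\mathbf{K}\alpha$ phrased in operator form. Your closing caveat about invertibility is also apt: the paper merely asserts it ``due to Mercer's theorem,'' whereas the pseudoinverse reading you give (or strict positive definiteness of the specific kernels used, e.g.\ the Gaussian) is what actually justifies writing $\mathbf{K}^{-1/2}$.
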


The proof is in the appendix.
Together with Proposition~\ref{prop_dp_func}, this result implies the following.

\begin{cor}
\label{cor_dp_rkhs}
For $\{f_D: D \in\mathcal{D}\} \subseteq \mathcal{H}$, the release of
$$
\tilde{f}_D = f_D + \frac{\Delta c(\beta)}{\alpha}G
$$
is $(\alpha,\beta)$-differentially private
(with respect to the cylinder $\sigma$-field) whenever
$$
\Delta \geq \sup_{D\sim D^\prime}\|f_D-f_{D^\prime}\|_\mathcal{H}.
$$
and when $G$ is the sample path of a Gaussian process having mean zero and covariance
function $K$, given by the reproducing kernel of $\mathcal{H}$.
\end{cor}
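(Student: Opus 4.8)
The plan is to verify the sensitivity hypothesis (\ref{finite_sens}) of Proposition~\ref{prop_dp_func} with the stated choice of $\Delta$, and then invoke that proposition directly. Fix a neighbouring pair $D\sim D^\prime$ and a finite collection of distinct points $x_1,\ldots,x_n\in T$. Since $\mathcal{H}$ is a vector space and $f_D,f_{D^\prime}\in\mathcal{H}$, the difference $g:=f_D-f_{D^\prime}$ lies in $\mathcal{H}$, and by the definition of $\Delta$ we have $\|g\|_\mathcal{H}=\|f_D-f_{D^\prime}\|_\mathcal{H}\le\Delta$. Let $M$ denote the $n\times n$ Gram matrix with entries $K(x_i,x_j)$; since $G$ has covariance function $K$, this is exactly the covariance matrix of $\bigl(G(x_1),\ldots,G(x_n)\bigr)$, so $M$ is the matrix appearing in (\ref{finite_sens}). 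Applying Proposition~\ref{prop_dp_rkhs} to the function $g$ and the points $x_1,\ldots,x_n$ gives
$$
\left\| M^{-1/2}\bigl(g(x_1),\ldots,g(x_n)\bigr)^T\right\|_2 \le \|g\|_\mathcal{H}\le\Delta,
$$
and the left-hand side is precisely the quantity inside the triple supremum of (\ref{finite_sens}) with $f_D(x_i)-f_{D^\prime}(x_i)=g(x_i)$.

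Taking the supremum over all $n<\infty$, over all configurations of distinct points, and over all neighbouring pairs $D\sim D^\prime$ then yields (\ref{finite_sens}). With that bound established, Proposition~\ref{prop_dp_func} immediately gives that the release of $\tilde f_D=f_D+\tfrac{\Delta c(\beta)}{\alpha}G$ is $(\alpha,\beta)$-differentially private with respect to the cylinder $\sigma$-field, which is the assertion of the corollary.

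The only point requiring any care is the restriction to \emph{distinct} points: both Proposition~\ref{prop_dp_rkhs} and the positive-definiteness requirement of Proposition~\ref{prop_dp_vec} presuppose that $M$ is invertible, which for distinct $x_i$ holds when $K$ is strictly positive definite. Repeated points make the finite-dimensional Gaussian distribution degenerate; they may be dropped from the supremum in (\ref{finite_sens}) (equivalently, one passes to the distinct sub-collection, which changes neither side of the displayed inequality). Hence this is a bookkeeping issue rather than a real obstacle, and the corollary follows from the chain Proposition~\ref{prop_dp_rkhs} $\Rightarrow$ (\ref{finite_sens}) $\Rightarrow$ Proposition~\ref{prop_dp_func}.
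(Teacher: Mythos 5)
Your proof is correct and follows exactly the route the paper intends: apply Proposition~\ref{prop_dp_rkhs} to the difference $f_D-f_{D^\prime}\in\mathcal{H}$ to verify the sensitivity condition (\ref{finite_sens}), then invoke Proposition~\ref{prop_dp_func}. The remark about distinct points and invertibility of the Gram matrix matches the paper's own treatment (invertibility is assumed via Mercer's theorem in the appendix proof), so nothing is missing.
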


\section{Examples}
\label{section::examples}

We now give some examples in
which the above technique may be used to construct private versions of functions in an RKHS.

\subsection{Kernel Density Estimation}
\label{sec::kde1}

Let $f_D$ be the kernel density estimator, where $D$ is
regarded as a sequence of points $x_i \in T$ as $i=1,\ldots,n$
drawn from a distribution with density $f$.
Let $h$ denote the bandwidth.
Assuming a Gaussian kernel, the estimator is
$$
f_D(x) = \frac{1}{n(2\pi h^2)^{d/2}}\sum_{i=1}^n\text{exp}\left\{\frac{-\|x-x_i\|_2^2}{2h^2} \right\}, \quad x \in T.
$$

Let
$D\sim D^\prime$ so that $D^\prime = x_1,\ldots,x_{n-1}, x_n^\prime$
(no loss of generality is incurred by demanding that the data
sequences differ in their last element). Then,
$$
(f_D - f_{D^\prime})(x) =
\frac{1}{n(2\pi h^2)^{d/2}}
\left( \text{exp}\left\{-\frac{\|x-x_n\|_2^2}{2h^2}\right\} -
\text{exp}\left\{-\frac{\|x-x_n^\prime\|_2^2}{2h^2}\right\} \right).
$$
If we use the Gaussian kernel as the covariance function for the
Gaussian process then upper bounding the RKHS norm of this function is trivial.
Thus, let
$K(x,y) = \text{exp}\left\{ -\frac{\|x-y\|_2^2}{2h^2} \right\}$.
Then
$f_D - f_{D^\prime} = \frac{1}{n(2\pi h^2)^{d/2}}\left( K_{x_n} - K_{x_n^\prime} \right)$
and
\begin{align*}
\|f_D - f_{D^\prime}\|^2_\mathcal{H} &=
\left(\frac{1}{n(2\pi h^2)^{d/2}}\right)^2\left(K(x_n,x_n) + K(x_n^\prime, x_n^\prime) -2K(x_n,x_n^\prime)\right) \\
&\leq 2\left(\frac{1}{n(2\pi h^2)^{d/2}}\right)^2.
\end{align*}
If we release
$$
\tilde{f}_D = f_D + \frac{c(\beta)\sqrt{2}}{\alpha n (2\pi h^2)^{d/2}}G
$$
where $G$ is a sample path of a Gaussian process having mean zero and covariance $K$,
then differential privacy is demonstrated by corollary~\ref{cor_dp_rkhs}.
We may compare the utility of the released estimator
to that of the non-private version.  Under standard smoothness assumptions on $f$, it is well-known (see \cite{olive-non}) that
the risk is
$$
R=\mathbb{E}\int (f_D(x) - f(x))^2 dx =
c_1 h^4 + \frac{c_2}{n h^d},
$$
for some constants $c_1$ and $c_2$.
The optimal bandwidth is
$h \asymp (1/n)^{1/(4+d)}$
in which case
$R = O( n^{-\frac{4}{4+d}})$.

For the differentially private function it is easy to see that
$$
\mathbb{E}\int (\tilde{f}_D(x) - f(x))^2 dx =
O\left( h^4 + \frac{c_2}{n h^d}\right).
$$
Therefore, at least in terms of rates,
no accuracy has been lost.

\begin{figure}[h]
\centering
\includegraphics[width=0.8\textwidth]{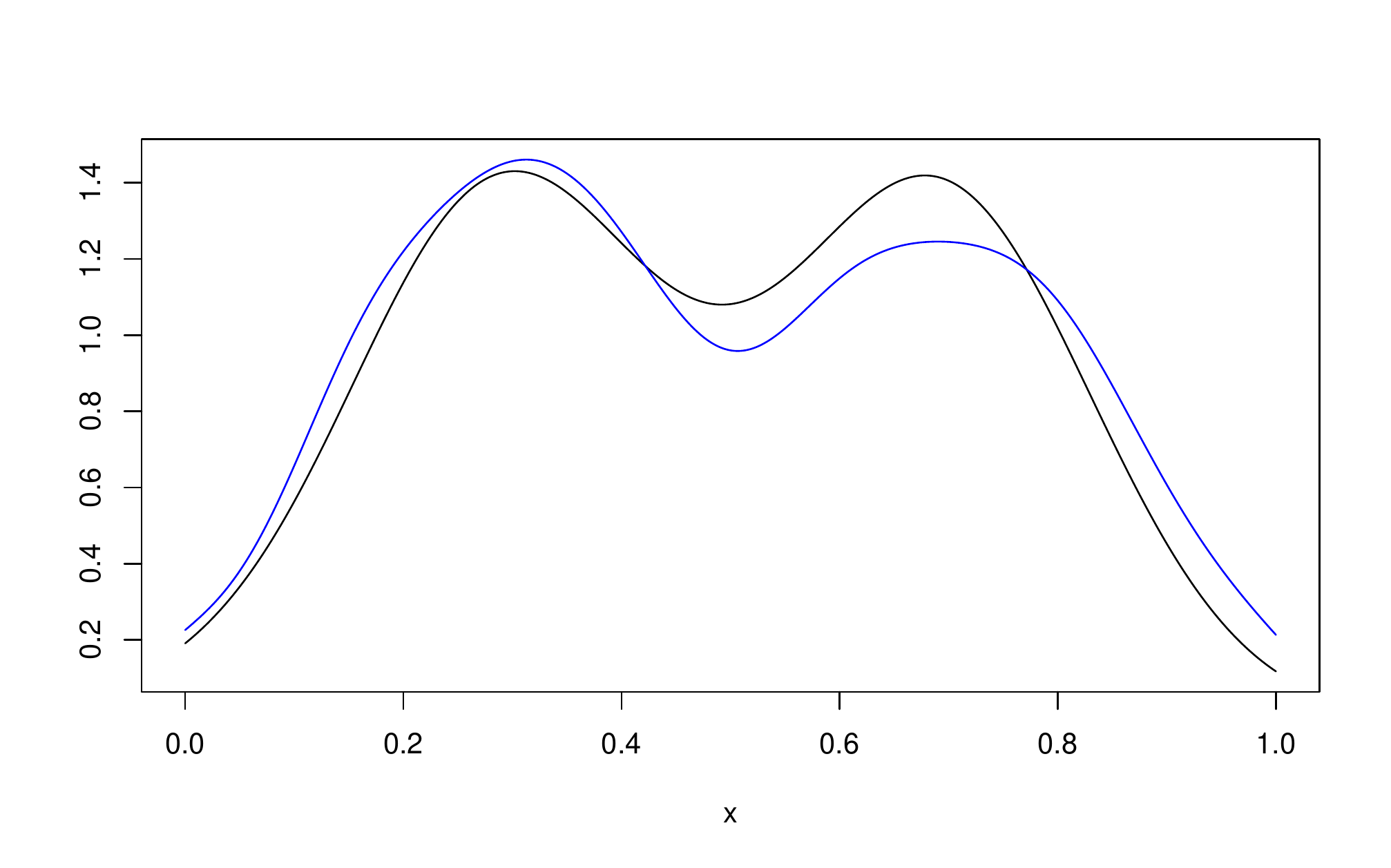}
\caption{An example of a kernel density estimator (the black curve)
  and the released version (the blue curve).  This
  uses the method developed in Section~\ref{sec::kde1}.  Here we sampled $n=100$
  points from a mixture of two normals centered at 0.3 and 0.7
  respectively.  We use $h=0.1$ and have $\alpha=1$ and $\beta =
  0.1$. The Gaussian Process is evaluated on an evenly spaced grid of
  1000 points between 0 and 1.  Note that gross features of the
  original kernel density estimator remain, namely the two peaks.}
  \label{fig_kde_rkhs}
\end{figure}

\subsubsection{Non-Isotropic Kernels}

The above demonstration of privacy also holds when the kernel is replaced
by a non-isotropic Gaussian kernel. In this case the kernel density estimate may take the form

$$
f_D(x) = \frac{1}{n(2\pi)^{d/2}|H|^{1/2}}\sum_{i=1}^n\text{exp}\left\{-\frac{1}{2}(x-x_i)^TH^{-1}(x-x_i) \right\}, \quad x \in T,
$$

\noindent where $H$ is a positive definite matrix and $|H|$ is the
determinant. For example it may
be required to employ a different choice of bandwidth for each
coordinate of the space, in which case $H$ would be a diagonal matrix
having non-equal entries on the diagonal.  So long as $H$ is fixed a-priori, privacy may be established by adding a Gaussian process having mean zero and covariance given by

$$
K(x,y) = \text{exp}\left\{-\frac{1}{2}(x-y)^TH^{-1}(x-y) \right\}.
$$

\noindent As above, the sensitivity is upper bounded, as

$$
\|f_D - f_{D^\prime}\|^2_\mathcal{H} \leq 2\left(\frac{1}{n(2\pi)^{d/2}|H|^{1/2}}\right)^2.
$$

Therefore it satisfies the $(\alpha,\beta)$-DP to release
$$
\tilde{f}_D = f_D + \frac{c(\beta)\sqrt{2}}{\alpha n(2\pi)^{d/2}|H|^{1/2}}G,
$$
where $G$ is a sample path of a Gaussian process having mean zero and covariance $K$.

\subsubsection{Private Choice of Bandwidth}

Note that the above assumed that $h$ (or $H$) was fixed a-priori by
the user.  In
usual statistical settings $h$ is a parameter that is tuned depending on
the data (not simply set to the correct order of growth as a function of
$n$).  Thus rather than fixed $h$ the user would use $\hat{h}$ which
depends on the data itself.  In order to do this it is necessary to find
a differentially private version of $\hat{h}$ and then to employ the
composition property of differential privacy (citation).

The typical way that the bandwidth is selected is by employing
the leave-one-out cross validation.  This consists of
choosing a grid of candidate values for $h$, evaluating the
leave one out log likelihood for each value, and then choosing whichever
is the maximizer.  This technique may be amenable to private
analysis via the ``exponential mechanism'' of (citation), however it would
evidently require that $T$ be a compact set which is known a-priori.
An alternative is to use a ``rule of thumb''
 (see \cite{scott_density}) for determining
the bandwidth which is given by
$$
\hat{h}_j = \left(\frac{4}{(d+1)n}\right)^{\frac{1}{d+4}} \frac{IQR_j}{1.34}
$$

\noindent In which $IQR_j$ is the observed interquartile range of the data
along the $j^{th}$ coordinate.  Thus this method gives a diagonal matrix $H$ as in the above section.  To make a private version $\tilde{h}_j$ we may use the technique of \cite{DL09} in which a differentially private algorithm for the interquartile range was developed.

\subsection{Functions in a Sobolev Space}
\label{sec::sobolev}

The above technique worked easily since we chose a
particular RKHS in which we knew the kernel density estimator to live.
What's more, since the functions themselves lay in the generating set
of functions for that space, the determination of the norm of the
difference $f_D-f_{D^\prime}$ was extremely simple.  In general we may
not be so lucky that the family of functions is amenable to such
analysis.  In this section we demonstrate a more broadly applicable
technique which may be used whenever the functions are sufficiently
smooth.  Consider the Sobolev space
$$
H^1[0,1] = \left\{ f\in C[0,1] : \int_0^1 (\partial f(x))^2 \ d\lambda(x) < \infty \right\}.
$$
This is a RKHS with the kernel
$K(x,y) = \text{exp}\left\{-\gamma\ |x-y|\right\}$
for positive constant $\gamma$.  The norm in this space is given by
\begin{equation}
\label{eq_norm_sobolev}
\|f\|^2_\mathcal{H} = \frac{1}{2}\left(f(0)^2 + f(1))^2\right) + \frac{1}{2\gamma}\int_0^1(\partial f(x))^2 +
\gamma^2f(t)^2\ d\lambda(t).
\end{equation}
See e.g., \cite{rkhsprob} (p. 316) and \cite{parzen61} for details.
Thus for a family of functions in
one dimension which lay in the Sobolev space $H^1$, we may determine a
noise level necessary to achieve the differential privacy by bounding
the above quantity for the difference of two functions.  For functions
over higher dimensional domains (as $[0,1]^d$
for some $d > 1$) we may construct an RKHS by taking the $d$-fold tensor
product of the above RKHS (see, in particular \cite{parzen63, aronszajn50} for details on the construction).  The resulting space has the reproducing kernel

$$
K(x,y) = \text{exp}\left\{ -\gamma\|x-y\|_1\right\},
$$

\noindent and is the completion of the set of functions

$$\mathcal{G}_0 = \left\{ f : [0,1]^d \to \mathbb{R} : f(x_1,\ldots,x_d) = f_1(x_1)\cdots f_d(x_d), f_i \in H^1[0,1] \right\}.$$

\noindent The norm over this set of functions is given by:

\begin{equation}
\label{eq_norm_tensor}
\|f\|^2_{\mathcal{G}_0} = \prod_{j=1}^d \|f_i\|^2_\mathcal{H}.
\end{equation}

The norm over the completed space agrees with the above on $\mathcal{G}_0$.
The explicit form is obtained by substituting (\ref{eq_norm_sobolev}) into
the right hand side of (\ref{eq_norm_tensor}) and replacing all instances of $\prod_{j=1}^df_j(x_j)$ with $f(x_1,\ldots,x_j)$.  Thus the norm in the completed space is defined for all $f$ possessing all first partial derivatives which are all in $\mathcal{L}_2$.

We revisit
the example of a kernel density estimator (with an isotropic
Gaussian kernel).  We note that this isotropic kernel function
is in the set $G_0$ defined above, as

$$
\phi_{\mu,h}(x) = \frac{1}{(2\pi h^2)^{d/2}}
\text{exp}\left\{-\frac{\|x-\mu\|_2^2}{2h^2}\right\} =
\prod_{j=1}^d \frac{1}{\sqrt{2\pi h}}
\text{exp}\left\{-\frac{(x_j-\mu_j)^2}{2h^2}\right\} = \prod_{j=1}^d\phi_{\mu_j,h}(x_j).
$$

\noindent Where $\phi_{\mu,h}$ is the isotropic Gaussian kernel on $\mathbb{R}^d$ with mean vector $\mu$ and $\phi_{\mu_j,h}$ is the Gaussian kernel in one dimension with mean $\mu_j$.  We obtain the norm of the latter one dimensional function by bounding the elements of the sum in (\ref{eq_norm_sobolev}) ad follows:
\begin{eqnarray*}
\int_0^1 (\partial \phi_{\mu_j,h}(x))^2\ d\lambda(x) & \leq & \int_{-\infty}^\infty \left(\partial \frac{1}{\sqrt{2\pi} h}e^{-(x-\mu_j)^2/2h^2} \right)^2\ d\lambda(x) =
\frac{1}{4\sqrt{\pi}h^3},
\end{eqnarray*}

\noindent and

\begin{eqnarray*}
\int_0^1 \phi_{\mu_j,h}(x)^2\ d\lambda(x) & \leq &  \int_{-\infty}^\infty \frac{1}{2\pi h^2}e^{-(x-\mu_j)^2/h^2} \ d\lambda(x) = \frac{1}{2\sqrt{2\pi}h},
\end{eqnarray*}
where we have used the fact that
$$
\phi_{\mu_j,h}(x)^2 \leq \frac{1}{2\pi h}, \quad \forall x \in \mathbb{R}^d.
$$
Therefore, choosing $\gamma=1/h$ leads to

$$
\|\phi_{\mu_j,h}\|_\mathcal{H}^2 \leq
\frac{1}{2\pi h^2} + \frac{1}{8\sqrt{\pi}h^2} + \frac{1}{4\sqrt{2\pi}h^2}
\leq \frac{1}{\sqrt{2\pi}h^2},
$$

and

$$
\|\phi_{\mu,h}\|_\mathcal{H}^2 \leq \frac{1}{(2\pi)^{d/2}h^{2d}}.
$$

\noindent Finally,

$$
\|f_D-f_{D^\prime}\|_\mathcal{H} = n^{-1}\|\phi_{x_n,h} - \phi_{x_n^\prime,h}\|_\mathcal{H} \leq \frac{2}{(2\pi)^{d/4}nh^d}
$$

Therefore, we observe a technique which attains higher
generality than the ad-hoc analysis of the preceding section.  However
this is at the expense of the noise level, which grows at a higher rate as $d$ increases.  An example of the technique
applied to the same kernel density estimation problem as above is given in Figure~\ref{fig_kde_sobolev}.

\begin{figure}[h]
\centering
\includegraphics[width=0.8\textwidth]{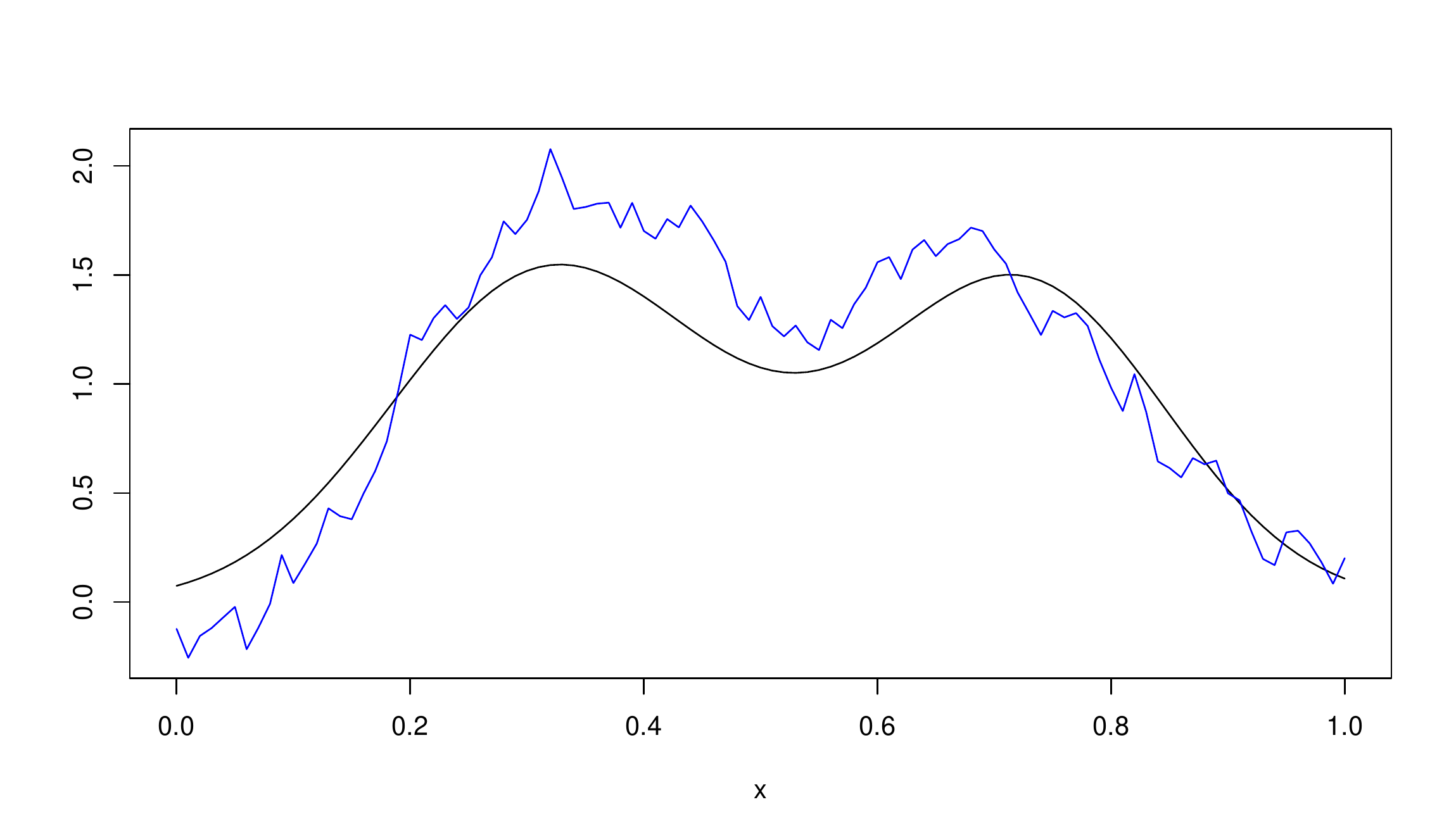}
\caption{An example of a kernel density estimator (the black curve)
  and the released version (the blue curve).  The setup is the same as
  in Figure~\ref{fig_kde_rkhs},  but the privacy mechanism developed in Section~\ref{sec::sobolev} was used instead.
  Note that the released function does not have the
  desirable smoothness of released function from Figure~\ref{fig_kde_rkhs}.}
  \label{fig_kde_sobolev}
\end{figure}

\subsection{Minimizers of Regularized Functionals in an RKHS}

The construction of the following section is due to \cite{bousquet02},
who were interested in determining the sensitivity of certain kernel
machines (among other algorithms) with  the aim of bounding the
generalization error of the output classifiers.  \cite{rubinstein}
noted that these
bounds are useful for establishing the noise level required for differential
privacy of support vector machines.
They are also useful for our approach to privacy in a function space.

We consider classification and regression schemes in which the
datasets $D = \{z_1,\ldots,z_n\}$ with $z_i = (x_i,y_i)$, where $x_i
\in [0,1]^d$ are some covariates, and $y_i$ is some kind of label,
either taking values on $\{-1,+1\}$ in the case of classification or
some taking values in some interval when the goal is regression.  Thus
the output functions are from $[0,1]^d$ to a subset of $\mathbb{R}$.
The functions we are interested in take the form
\begin{equation}\label{eq_opt}
f_D = \arg\min_{g \in \mathcal{H}} \frac{1}{n} \sum_{z_i\in D}\ell(g, z_i) + \lambda \|g\|_\mathcal{H}^2
\end{equation}
where $\mathcal{H}$ is some RKHS to be determined, and $\ell$ is the
so-called ``loss function.''  We now recall a definition from
\cite{bousquet02} (using $M$ in place of their $\sigma$ to prevent confusion):

\begin{defn}[$M$-admissible loss function: see \cite{bousquet02}]
A loss function: $\ell(g,z) = c(g(x),y)$ is called $M$-admissible whenever $c$ it is
convex in its first argument and
Lipschitz with constant $M$ in its first argument.
\end{defn}

We will now demonstrate that for (\ref{eq_opt}), whenever the loss
function is admissible, the minimizers on adjacent datasets may be
bounded close together in RKHS norm.  Denote the part of the
optimization due to the loss function:
$$
L_D(f) = \frac{1}{n} \sum_{z_i \in D}\ell(f, z_i).
$$
Using the technique from the proof of lemma 20 of \cite{bousquet02} we
find that since $\ell$ is convex in its first argument we have
$$
L_D(f_D + \eta \delta_{D^\prime,D}) - L_D(f_D) \leq \eta(L_D(f_{D^\prime}) - L_D(f_D)),
$$
\noindent where $\eta\in[0,1]$ and we use $\delta_{D^\prime,D} = f_{D^\prime} - f_{D}$.
This also holds when $f_D$ and $f_{D^\prime}$ swap places.  Summing
the resulting inequality with the above and rearranging yields
$$
L_D(f_{D^\prime} - \eta \delta_{D^\prime,D}) -  L_D(f_{D^\prime}) \leq L_D(f_D) - L_D(f_D + \eta \delta_{D^\prime,D}).
$$
Due to the definition of $f_D,f_{D^\prime}$ as the minimizers of their respective functionals we have
\begin{align*}
L_D(f_D) + \lambda\|f_D\|_\mathcal{H}^2 &\leq
L_D(f_D + \eta\delta_{D^\prime,D}) + \lambda\|f_D + \eta\delta_{D^\prime,D}\|_\mathcal{H}^2 \\
L_{D^\prime}(f_{D^\prime}) + \lambda\|f_{D^\prime}\|_\mathcal{H}^2 &
\leq L_{D^\prime}(f_{D^\prime} - \eta \delta_{D^\prime,D}) + \lambda\|f_{D^\prime} - \eta \delta_{D^\prime,D}\|_\mathcal{H}^2.
\end{align*}
This leads to the inequalities
\begin{align*}
0 & \geq  \lambda\left(\|f_D\|_\mathcal{H}^2 - \|f_D + \eta\delta_{D^\prime,D}\|_\mathcal{H}^2  + \|f_{D^\prime}\|_\mathcal{H}^2 - \|f_{D^\prime} - \eta \delta_{D^\prime,D}\|_\mathcal{H}^2   \right) \\
 & \quad + L_D(f_D) - L_D(f_D + \eta\delta_{D^\prime,D} +  L_{D^\prime}(f_{D^\prime})- L_{D^\prime}(f_{D^\prime} - \eta \delta_{D^\prime,D}) \\
& \geq 2\lambda\|\eta\delta_{D^\prime,D}\|_\mathcal{H}^2 - L_D(f_{D^\prime}) + L_D(f_{D^\prime} - \eta\delta_{D^\prime,D}) +  L_{D^\prime}(f_{D^\prime})- L_{D^\prime}(f_{D^\prime} - \eta \delta_{D^\prime,D}) \\
& = 2\lambda\|\eta\delta_{D^\prime,D}\|_\mathcal{H}^2 + \frac{1}{n}\left(\ell(z,f_{D^\prime}) - \ell(z,f_{D^\prime} - \eta \delta_{D^\prime,D}) + \ell(z^\prime,f_{D^\prime}) - \ell(z^\prime,f_{D^\prime} - \eta \delta_{D^\prime,D})\right).
\end{align*}

\noindent Moving the loss function term to the other side and using the Lipschitz property we finally obtain that

$$\|f_D-f_{D^\prime}\|_\mathcal{H}^2 \leq \frac{M}{\lambda n}\|f_D - f_{D^\prime}\|_\infty.$$

\noindent What's more, the reproducing property together with Cauchy-Schwarz inequality yields
$$
|f_D(x) - f_{D^\prime}(x)| = |\innerh{f_D-f_{D^\prime}}{K_x}| \leq
\|f_D-f_{D^\prime}\|_\mathcal{H}\sqrt{K(x,x)}.
$$
Combining with the previous result gives
$$
\|f_D-f_{D^\prime}\|_\mathcal{H}^2 \leq \frac{M}{\lambda n}\|f_D-f_{D^\prime}\|_\mathcal{H}\sqrt{\sup_xK(x,x)},
$$
which, in turn, leads to
$$
\|f_D-f_{D^\prime}\|_\mathcal{H} \leq \frac{M}{\lambda n}\sqrt{\sup_xK(x,x)}.
$$
For a soft-margin kernel SVM we have the loss function: $\ell(g,z) =
(1-yg(x))_+$, which means the positive part of the term in
parentheses.  Since the label $y$ takes on either plus or minus one,
we find this to be $1$-admissible.  An example of a kernel SVM in $T=\mathbb{R}^2$ is
shown in Figure~\ref{fig_svm_demo}.

\begin{figure}[h!]
  \centering
      \includegraphics[width=0.8\textwidth]{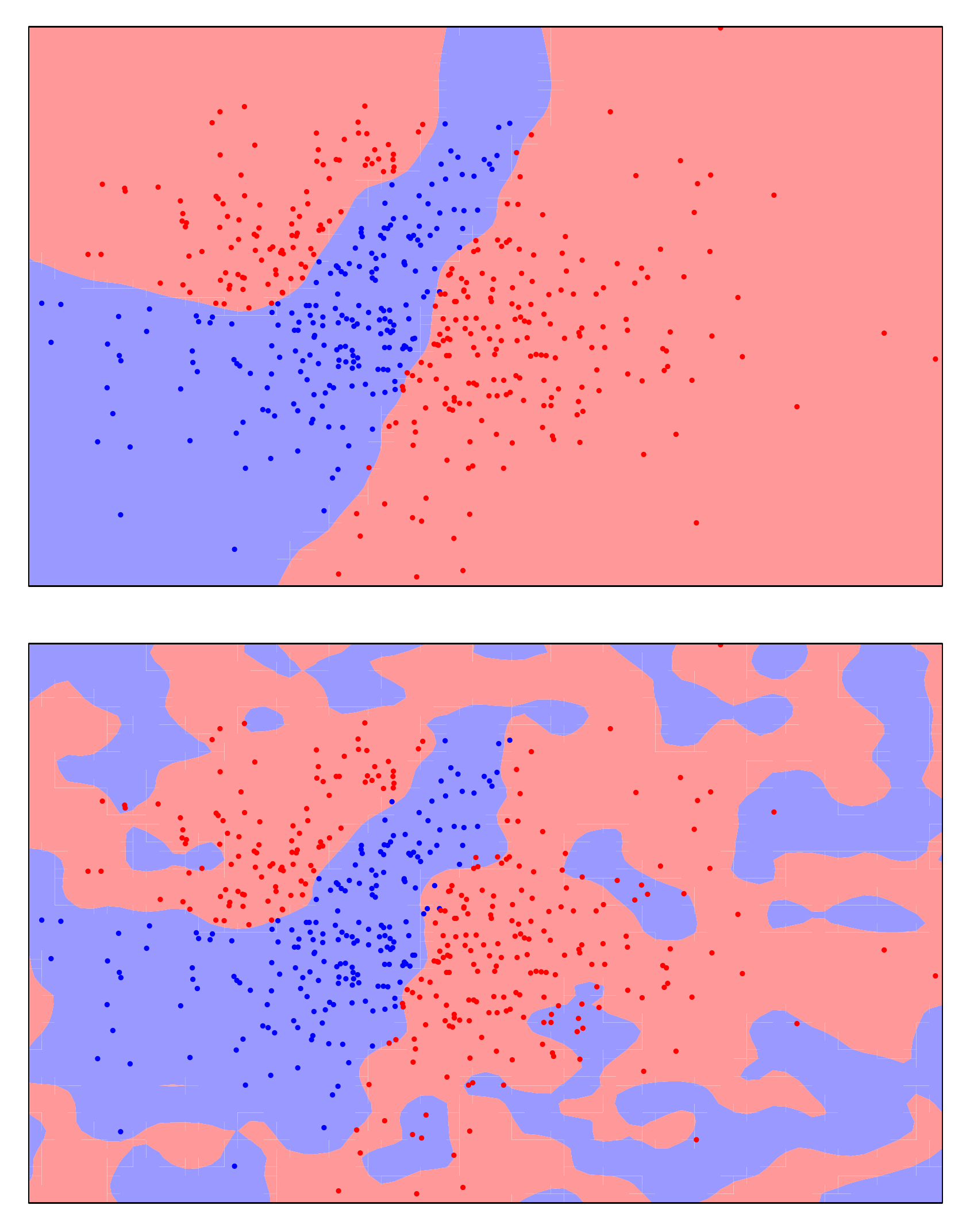}
  \caption{An example of a kernel support vector machine.  In the top image are the data points, with the colors representing the two class labels.  The background color corresponds to the class predicted by the learned kernel svm.  In the bottom image are the same data points, with the predictions of the private kernel svm.  This example uses the Gaussian kernel for classification. }
  \label{fig_svm_demo}
\end{figure}

\section{Algorithms}
\label{section::algorithms}

There are two main modes in which functions $f_D$ could
be released by the holder of the data $D$ to the outside parties.  The
first is a ``batch'' setting in which the parties
designate some finite collection of points $x_1\ldots,x_n \in T$.  The
database owner computes $\tilde{f}_D(x_i)$ for each $i$ and
return the vector of results.  At this point the entire transaction
would end with only the collection of pairs $(x_i, \tilde{f}_D(x_i))$
being known to the outsiders.  An alternative is the ``online''
setting in which outside users repeatedly specify points in $x_i \in
T$, the database owner replies with $\tilde{f}_D(x_i)$, but unlike
the former setting he remains available to respond to more
requests for function evaluations.  We name these settings ``batch''
and ``online'' for their resemblance of the batch and online settings
typically considered in machine learning algorithms.

The batch method is nothing more than sampling a
multivariate Gaussian, since the set $x_1,\ldots,x_n \in T$ specifies
the finite dimensional distribution of the Gaussian process from which
to sample.  The released vector is simply
$$
\left(\begin{array}{c} \tilde{f}_D(x_1) \\ \vdots \\ \tilde{f}_D(x_n) \end{array}\right) \sim \mathcal{N}\left(
\left(\begin{array}{c} f_D(x_1) \\ \vdots \\ f_D(x_n) \end{array}\right),\
\frac{c(\beta)\Delta}{\alpha}\left(\begin{array}{ccc}
K(x_1,x_1) & \cdots & K(x_1,x_n) \\
\vdots & \ddots & \vdots \\
K(x_n,x_1) & \cdots & K(x_n,x_n)
\end{array}
\right)
\right).
$$
In the online setting, the data owner upon receiving a
request for evaluation at $x_i$ would sample the gaussian process
conditioned on the samples already produced at $x_1,\ldots,x_{i-1}$.
Let
$$
C_i = \left(\begin{array}{ccc}
K(x_1,x_1) & \cdots & K(x_1,x_{i-1}) \\
\vdots & \ddots & \vdots \\
K(x_{i-1},x_1) & \cdots & K(x_{i-1},x_{i-1})
\end{array}
\right),\quad
G_i = \left(\begin{array}{c} \tilde{f}_D(x_1) \\ \vdots \\ \tilde{f}_D(x_{i-1}) \end{array}\right), \quad
V_i = \left(\begin{array}{c} K(x_1,x_i) \\ \vdots \\ K(x_{i-1}, x_i) \end{array}\right).
$$
Then,
$$
\tilde{f}_D(x_i) \sim \mathcal{N} \left(
V_i^TC_i^{-1}G_i,\ K(x_i,x_i) - V_i^TC_i^{-1}V_i
\right).
$$
The database owner may track the inverse matrix $C_i^{-1}$
and after each request update it into $C_{i+1}^{-1}$ by making use of
Schurs Complements combined with the matrix inversion lemma.
Nevertheless we note that as $i$ increases the computational
complexity of answering the request will in general grow.  In the very
least, the construction of $V_i$ takes time proportional to $i$.  This
may make this approach problematic to implement in practise.  However
we note that when using the covariance kernel
$$
K(x,y) = \text{exp}\left\{ -\gamma \ |x-y|_1 \right\}
$$
that a more efficient algorithm presents itself.  This is the kernel
considered in section~\ref{sec::sobolev}.  Due to the above
form of $K$, we find that for $x < y < z$ we have: $K(x,z) =
K(x,y)K(y,z)$.  Therefore in using the above algorithm we would find
that $V_i$ is always contained in the span of at most two rows of
$C_i$.  This is most evident when, for instance, $x_i < \min_{j < i}x_j$.  In
this case let $m = \arg\min_{j < i}x_j$ $V_i = K(x_i,x_m) C_i(m)$, in
which $C_i(m)$ means the $m^{th}$ row of $C_i$.  Therefore
$C_i^{-1}V_i$ will be a sparse vector with exactly one non-zero entry
(taking value $K(x,x_m)$) in the $m^{th}$ position.  Similar algebra
applies whenever $x_i$ falls between two previous points, in which
case $V_i$ lays in the span of the two rows corresponding to the
closest point on the left and the closest on the right.  Using the
above kernel with some choice of $\gamma$ let
$$
\rho(x,y) = e^{\gamma |x-y|} - e^{-\gamma |x-y|}.
$$
Let $\xi(x_i) = \tilde{f}_D(x_i) - f_D(x_i)$ represent the noise process.
We find that the conditional distribution of $\xi(x_i)$ to be Normal with mean and variance given by:
$$
\mathbb{E}\xi(x_i) =
\begin{cases}
K(x_i,x_{(1)})\xi(x_{(1)}) & x_i < x_{(1)} \\
K(x_i,x_{(i-1)})\xi(x_{(i-1)}) & x_i > x_{(i-1)} \\
\frac{\rho(x_{(j+1)}, x_i)}{\rho(x_{(j)}, x_{(j+1)})}\xi(x_{(j)}) +  \frac{\rho(x_{(j)}, x_i)}{\rho(x_{(j)}, x_{(j+1)})}\xi(x_{(j+1)}) & x_{(j)} < x_i < x_{(j+1)},
\end{cases}
$$
and
$$
\text{Var}[\tilde{f}_D(x_i)] =
\begin{cases}
1-K(x,x_{(1)})^2 & x_i < x_{(1)} \\
1-K(x,x_{(i-1)})^2 & x_i > x_{(i-1)} \\
1-K(x,x_{(j)})\frac{\rho(x_{(j+1)}, x_i)}
{\rho(x_{(j)}, x_{(j+1)})} - K(x,x_{(j+1)})\frac{\rho(x_{(j)}, x_i)}{\rho(x_{(j)}, x_{(j+1)})} & x_{(j)} < x_i < x_{(j+1)},
\end{cases}
$$
where $x_{(1)} < x_{(2)} < \cdots < x_{(i-1)}$ are
the points $x_1,\ldots,x_{i-1}$ after being sorted into increasing
order.  In using the above algorithm it is only necessary for the data
owner to store the values $x_i$ and $\tilde{f}_D(x_i)$.  When using
the proper data structures e.g., a sorted doubly linked list for the
$x_i$ it is possible to determine the mean and variance using the
above technique in time proportional to $\log(i)$ which is a
significant improvement over the general linear time scheme above
(note that the linked list is suggested since then it is possible to
update the list in constant time).

\section{Conclusion}
\label{section::conclusion}

We have shown how to add random noise to
a function in such a way that
differential privacy is preserved.
It would be interesting to study this method
in the many applications of functional data analysis
\cite{ramsay}.

On a more theoretical note,
we have not addressed the issue of lower bounds.
Specifically, we can ask:
Given that we want to release a differentially private
function, what is the least amount of noise that must necessarily
be added in order to preserve differential privacy?
This question has been addressed in detail
for real-valued, count-valued and vector-valued data.
However, those techniques apply to the case of $\beta = 0$ whereupon
the family $\{P_D\}$ are all mutually absolutely continuous.  In the
case of $\beta > 0 $ which we consider this no longer applies and so
the determination of lower bounds is complicated (for example, since
quantities such as the KL divergence are no longer bounded).

\section{Appendix}

{\bf Proof of Proposition \ref{prop_dp_rkhs}.}
Note that invertibility of the matrix is safely assumed due to Mercer's theorem.
Denote the matrix by $M^{-1}$.  Denote by $P$ the operator $\mathcal{H} \to \mathcal{H}$ defined by
$$
P = \sum_{i=1}^nK_{x_i}\sum_{j=1}^n(M^{-1})_{i,j}\innerh{K_{x_j}}{\cdot}
$$
We find this operator to be idempotent in the sense that $P=P^2$:
\begin{align*}
P^2 &= \sum_{i=1}^nK_{x_i}\sum_{j=1}^n(M^{-1})_{i,j}\innerh{K_{x_j}}{\sum_{k=1}^nK_{x_k}\sum_{l=1}^n(M^{-1})_{k,l}\innerh{K_{x_l}}{\cdot}} \\
 &= \sum_{i=1}^nK_{x_i}\sum_{j=1}^n(M^{-1})_{i,j}\sum_{k=1}^n\innerh{K_{x_j}}{K_{x_k}}\sum_{l=1}^n(M^{-1})_{k,l}\innerh{K_{x_l}}{\cdot} \\
 &= \sum_{i=1}^nK_{x_i}\sum_{j=1}^n(M^{-1})_{i,j}\sum_{k=1}^nM_{j,k}\sum_{l=1}^n(M^{-1})_{k,l}\innerh{K_{x_l}}{\cdot} \\
 &= \sum_{i=1}^nK_{x_i}\sum_{l=1}^n(M^{-1})_{i,l}\innerh{K_{x_l}}{\cdot} \\
& = P.
\end{align*}
$P$ is also self-adjoint due to the symmetry of $M$, i.e.
\begin{align*}
\innerh{Pf}{g} &= \innerh{\sum_{i=1}^nK_{x_i}\sum_{j=1}^n(M^{-1})_{i,j}\innerh{K_{x_j}}{f}}{g}\\
&= \innerh{\sum_{i=1}^n\innerh{K_{x_i}}{g}\sum_{j=1}^n(M^{-1})_{i,j}K_{x_j}}{f}\\
&= \innerh{\sum_{j=1}^nK_{x_j}\sum_{i=1}^n(M^{-1})_{i,j}\innerh{K_{x_i}}{g}}{f}\\
&= \innerh{\sum_{j=1}^nK_{x_j}\sum_{i=1}^n(M^{-1})_{j,i}\innerh{K_{x_i}}{g}}{f}\\
&= \innerh{Pg}{f}.\\
\end{align*}
Therefore,
\begin{align*}
\|f\|_\mathcal{H}^2 &= \innerh{f}{f} \\
& = \innerh{Pf + (f-Pf)}{Pf + (f-Pf)} \\
& = \innerh{Pf}{Pf} + 2 \innerh{Pf}{f-Pf} + \innerh{f-Pf}{f-Pf} \\
& = \innerh{Pf}{Pf} + 2 \innerh{f}{Pf-P^2f} + \innerh{f-Pf}{f-Pf} \\
& = \innerh{Pf}{Pf} + \innerh{f-Pf}{f-Pf} \\
& \geq \innerh{Pf}{Pf} \\
& = \innerh{f}{Pf}.
\end{align*}
The latter term is nothing more than the left hand side in the
statement.  In summary the quantity in the statement of the theorem is
just the square RKHS norm in the restriction of $\mathcal{H}$ to the
subspace spanned by the functions $K_{x_i}$. $\Box$

\bibliographystyle{plain}
\bibliography{dp}

\end{document}